\newtheorem{theorem}{Theorem}
\newtheorem{corollary}{Corollary}
\begin{document}

% ****************** TITLE ****************************************

\title{Scalable and Sustainable Deep Learning\\ via Randomized Hashing}

% possible, but not really needed or used for PVLDB:
%\subtitle{[Extended Abstract]
%\titlenote{A full version of this paper is available as\textit{Author's Guide to Preparing ACM SIG Proceedings Using \LaTeX$2_\epsilon$\ and BibTeX} at \texttt{www.acm.org/eaddress.htm}}}

% ****************** AUTHORS **************************************

% You need the command \numberofauthors to handle the 'placement
% and alignment' of the authors beneath the title.
%
% For aesthetic reasons, we recommend 'three authors at a time'
% i.e. three 'name/affiliation blocks' be placed beneath the title.
%
% NOTE: You are NOT restricted in how many 'rows' of
% "name/affiliations" may appear. We just ask that you restrict
% the number of 'columns' to three.
%
% Because of the available 'opening page real-estate'
% we ask you to refrain from putting more than six authors
% (two rows with three columns) beneath the article title.
% More than six makes the first-page appear very cluttered indeed.
%
% Use the \alignauthor commands to handle the names
% and affiliations for an 'aesthetic maximum' of six authors.
% Add names, affiliations, addresses for
% the seventh etc. author(s) as the argument for the
% \additionalauthors command.
% These 'additional authors' will be output/set for you
% without further effort on your part as the last section in
% the body of your article BEFORE References or any Appendices.

\numberofauthors{2} %  in this sample file, there are a *total*
% of EIGHT authors. SIX appear on the 'first-page' (for formatting
% reasons) and the remaining two appear in the \additionalauthors section.

\author{
% You can go ahead and credit any number of authors here,
% e.g. one 'row of three' or two rows (consisting of one row of three
% and a second row of one, two or three).
%
% The command \alignauthor (no curly braces needed) should
% precede each author name, affiliation/snail-mail address and
% e-mail address. Additionally, tag each line of
% affiliation/address with \affaddr, and tag the
% e-mail address with \email.
%
% 1st. author
\alignauthor
Ryan Spring\\
       \affaddr{Rice University}\\
       \affaddr{Department of Computer Science}\\
       \affaddr{Houston, TX, USA}\\
       \email{rdspring1@rice.edu}
% 2nd. author
\alignauthor
Anshumali Shrivastava\\
       \affaddr{Rice University}\\
       \affaddr{Department of Computer Science}\\
       \affaddr{Houston, TX, USA}\\
       \email{anshumali@rice.edu}
}

\maketitle

\begin{abstract}
Current deep learning architectures are growing larger in order to learn from complex datasets. These architectures require giant matrix multiplication operations to train millions of parameters. Conversely, there is another growing trend to bring deep learning to low-power, embedded devices. The matrix operations, associated with both training and testing of deep networks, are very expensive from a computational and energy standpoint. We present a novel hashing based technique to drastically reduce the amount of computation needed to train and test deep networks. Our approach combines recent ideas from adaptive dropouts and randomized hashing for maximum inner product search to select the nodes with the highest activation efficiently. Our new algorithm for deep learning reduces the overall computational cost of forward and back-propagation by operating on significantly fewer (sparse) nodes. As a consequence, our algorithm uses only 5\% of the total multiplications, while keeping on average within 1\% of the accuracy of the original model. A unique property of the proposed hashing based back-propagation is that the updates are always sparse. Due to the sparse gradient updates, our algorithm is ideally suited for asynchronous and parallel training leading to near linear speedup with increasing number of cores. We demonstrate the scalability and sustainability (energy efficiency) of our proposed algorithm via rigorous experimental evaluations on several real datasets. 
\end{abstract}

\section{Introduction}
% Computation
Deep learning is revolutionizing big-data applications, after being responsible for groundbreaking improvements in image classification \cite{krizhevsky2012imagenet} and speech recognition \cite{hinton2012deep}.  With the recent upsurge in data, at a much faster rate than our computing capabilities, neural networks are growing larger to process information more effectively. In 2012, state-of-the-art convolutional neural networks contained at most 10 layers. Afterward, each successive year has brought deeper architectures with greater accuracy. Last year, Microsoft's deep residual network \cite{he2015deep}, which won the ILSVRC 2015 competition with a 3.57\% error rate, had 152 layers and 11.3 billion FLOPs. To handle such large neural networks, researchers usually train them on high-performance graphics cards or large computer clusters. 

% NN Background
The basic building block of a neural network is a neuron. Each neuron activates when a specific feature appears in its input. The magnitude of a neuron's activation is a measure of the neuron's confidence for the presence or absence of a feature. In the first layer of deep network, the neurons detect simple edges in an image. Each successive layer learns more complex features. For example, the neurons will find parts of a face first, such as the eyes, ears, or mouth, before distinguishing between different faces. For classification tasks, the final layer is a classic linear classifier - Softmax or SVM. Multiple layers combine to form a complex, non-linear function capable of representing any arbitrary function \cite{cybenko1989approximation}. Adding more neurons to a network layers increases its expressive power. This enhanced expressive power has made massive-sized deep networks a common practice in large scale machine learning systems, which has shown some very impressive boost over old benchmarks.  

% Distributed training and parallelism.
Due to the growing size and complexity of networks, efficient algorithms for training massive deep networks in a distributed and parallel environment is currently the most sought after problem in both academia and the commercial industry. For example, Google \cite{dean2012large} used a 1-billion parameter neural network, which took three days to train on a 1000-node cluster, totaling over 16,000 CPU cores. Each instantiation of the network spanned 170 servers. In distributed computing environments, the parameters of giant deep networks are required to be split across multiple nodes. However, this setup requires costly communication and synchronization between the parameter server and processing nodes in order to transfer the gradient and parameter updates. The sequential and dense nature of gradient updates prohibits any efficient splitting (sharding) of the neural network parameters across computer nodes. There is no clear way to avoid the costly synchronization without resorting to some ad-hoc breaking of the network. This ad-hoc breaking of deep networks is not well understood and is likely to hurt performance. Synchronization is one of the major hurdles in scalability. Asynchronous training is the ideal solution, but it is sensitive to conflicting, overlapping parameter updates, which leads to poor convergence.

% Energy-Efficiency / Privacy
While deep networks are growing larger and more complex, there is also push for greater energy efficiency to satisfy the growing popularity of machine learning applications on mobile phones and low-power devices. For example, there is recent work by \cite{mcmahan2016federated} aimed at leveraging the vast data of mobile devices. This work has the users train neural networks on their local data, and then periodically transmit their models to a central server. This approach preserves the privacy of the user's personal data, but still allows the central server's model to learn effectively. Their work is dependent on training neural networks locally. Back-propagation~\cite{rumelhart1988learning} is the most popular algorithm for training deep networks. Each iteration of the back-propagation algorithm is composed of giant matrix multiplications. These matrices are very large, especially for massive networks with millions of nodes in the hidden layer, which are common in industrial practice. Large matrix multiplications are parallelizable on GPUs, but not energy-efficient. Users require their phones and tablets to have long battery life. Reducing the computational costs of neural networks, which directly translates into longer battery life, is a critical issue for the mobile industry.

The current challenges for deep learning illustrate a great demand for algorithms that reduce the amount of computation and energy usage. To reduce the bottleneck matrix multiplications, there has been a flurry of works around reducing the amount of computations associated with them. Most of them revolve around exploiting low-rank matrices or low precision updates. We review these techniques in details in Section~\ref{sec:related_work}. However, updates with these techniques are hard to parallelize making them unsuitable for distributed and large scale applications. On the contrary, our proposal capitalizes on the sparsity of the activations to reduce computations. To the best of our knowledge, this is the first proposal that exploits sparsity to reduce the amount of computation associated with deep networks. We further show that our approach admits asynchronous parallel updates leading to perfect scaling with increasing parallelism. 

% Observations
Recent machine learning research has focused on techniques for dealing with the famous problem of over-fitting with deep networks. A notable line of work \cite{ba2013adaptive, makhzani2013k, makhzani2015winner} improved the accuracy of neural networks by only updating the neurons with the highest activations. Adaptive dropout \cite{ba2013adaptive} sampled neurons in proportion to an affine transformation of the neuron's activation. The Winner-Take-All (WTA) approach \cite{makhzani2013k, makhzani2015winner} kept the top-k\% neurons, by calculating a hard threshold from mini-batch statistics. It was found that such a selective choice of nodes and sparse updates provide a natural regularization \cite{srivastava2014dropout}. However, these approaches rely on inefficient, brute-force techniques to find the best neurons. Thus, these techniques are equally as expensive as the standard back-propagation method, leading to no computational savings. 

We propose a hashing-based indexing approach to train and test neural networks, which capitalizes on the rich theory of randomized sub-linear algorithms from the database literature to perform adaptive dropouts \cite{ba2013adaptive} efficiently while requiring significantly less computation and memory overhead. Furthermore, hashing algorithms are embarrassingly parallel and can be easily distributed with small communications, which is perfect for large-scale, distributed deep networks.

% Our Approach
Our idea is to index the neurons (the weights of each neuron as a vector) into a hash table using locality sensitive hashing. Such hash tables of neurons allow us to select the neurons with the highest activations, without computing all of the activations, in the sub-linear time leading to significant computational savings. Moreover, as we show, since our approach results in a sparse active set of neurons randomly, the gradient updates are unlikely to overwrite. Such updates are ideal for asynchronous and parallel gradient updates. It is known that asynchronous stochastic gradient descent\cite{recht2011hogwild} (ASGD) will converge if the number of simultaneous parameter updates is small. We heavily leverage this sparsity which unique to our proposal. On several deep learning benchmarks, we show that our approach outperforms standard algorithms including vanilla dropout \cite{srivastava2014dropout} at high sparsity levels and matches the performance of adaptive dropout \cite{ba2013adaptive} and winner-take-all \cite{makhzani2013k, makhzani2015winner} while needing less computation (only 5\%).

\subsection{Our Contributions}
\begin{enumerate} 
    \item We present a scalable and sustainable algorithm for training and testing of fully-connected neural networks. Our idea capitalized on the recent successful technique of adaptive dropouts combined with the smart data structure (hash tables) based on recently found locality sensitive hashing (LSH) for maximum inner product search (MIPS) \cite{shrivastava2014asymmetric}. We show significant reductions in the computational requirement for training deep networks, without any significant loss in accuracy (within 1\% of the accuracy of the original model on average). In particular, our method achieves the performance of other state-of-the-art regularization methods such as Dropout, Adaptive Dropout, and Winner-Take-All when using only 5\% of the neurons in a standard neural network.
    \item Our proposal reduces computations associated with both the training and testing (inference) of deep networks by reducing the multiplications needed for the feed-forward operation. 
    \item The key idea in our algorithm is to associate LSH hash tables~\cite{gionis1999similarity,Proc:Indyk_STOC98} with every layer. These hash tables support constant-time $O(1)$ insertion and deletion operations.
    \item Our scheme naturally leads to sparse gradient updates. Sparse updates are ideally suited for massively parallelizable asynchronous training \cite{recht2011hogwild}. We demonstrate that this sparsity opens room for truly asynchronous training without any compromise with accuracy. As a result, we obtain near-linear speedup when increasing number of processors.
\end{enumerate}

\section{Related Work}
\label{sec:related_work}
There have been several recent advances aimed at improving the performance of neural networks. \cite{lin2015neural} reduced the number of floating point multiplications by mapping the network's weights stochastically to \{-1, 0, 1\} during forward propagation and performing quantized back-propagation that replaces floating-point multiplications with simple bit-shifts. Reducing precision is an orthogonal approach, and it can be easily integrated with any other approaches. 

\cite{sindhwani2015structured} uses structured matrix transformations with low-rank matrices to reduce the number of parameters for the fully-connected layers of a neural network. This low-rank constraint leads to a smaller memory footprint. However, such an approximation is not well suited for asynchronous and parallel training limiting its scalability. We instead use random but sparse activations, leveraging database advances in approximate query processing, because they can be easily parallelized. [See Section~\ref{sec:lowrank} for more details] 

We briefly review dropouts and its variants, which are popular sparsity promoting techniques relying on sparse activations.  Although such randomized sparse activations have been found favorable for better generalization of deep networks, to the best of our knowledge, this sparsity has not been adequately exploited to make deep networks computationally cheap and parallelizable. We provide first such evidence. 

Dropout \cite{srivastava2014dropout} is primarily a regularization technique that addresses the issue of over-fitting randomly dropping half of the nodes in a hidden layer while training the network. The nodes are independently sampled for every stochastic gradient descent epoch~\cite{srivastava2014dropout}. We can reinterpret Dropout as a technique for reducing the number of multiplications during forward and backward propagation phases, by ignoring nodes randomly in the network which computing the feed-forward pass. It is known that the network's performance becomes worse when too many nodes are dropped from the network. Usually, only 50\% of the nodes in the network are dropped when training the network. At test time, the network takes the average of the thinned networks to form a prediction from the input data, which uses full computations.  

Adaptive dropout \cite{ba2013adaptive} is an evolution of the dropout technique that adaptively chooses the nodes based on their activations. The methodology samples few nodes from the network, where the sampling is done with probability proportional to the node activations which are dependent on the current input.  Adaptive dropouts demonstrate better performance than vanilla dropout~\cite{srivastava2014dropout}. A notable feature of adaptive dropouts was that it was possible to drop significantly more nodes compared to dropouts while still retaining superior performance. Winner-Take-All \cite{makhzani2013k, makhzani2015winner} is an extreme form of adaptive dropouts that uses mini-batch statistics to enforce a sparsity constraint. With this technique, only the k\% largest, non-zero activations are used during the forward and backward phases of training. This approach requires computing the forward propagation step before selecting the k\% nodes with a hard threshold. Unfortunately, all these techniques require full computation of the activations to selectively sample nodes. Therefore, they are only intended for better generalization and not reducing computations.  Our approach uses the insight that selecting a very sparse set of hidden nodes with the highest activations can be reformulated as dynamic approximate query processing problem, which can solve efficiently using locality sensitive hashing. The differentiating factor between adaptive dropout and winner-take-all and our approach is we use sub-linear time randomized hashing to determine the active set of nodes instead computing the inner product for each node individually. 

There is also another orthogonal line of work which uses hashing to reduce memory. \cite{chen2015compressing} introduced a new type of deep learning architecture called Hashed Nets. Their objective was to decrease the number of parameters in the given neural network by using a universal random hash function to tie node weights. Network connections that map to the same hash value (hash collisions) are restricted to have same values. The architecture has the virtual appearance of a regular network while maintaining only a small subset of real weights. We point out that hashed nets are complementary to our approach because we focus on reducing the computational cost involved in using deep nets rather than its size in memory. 

\section{Low-Rank vs Sparsity}
\label{sec:lowrank}
The low-rank (or structured) assumption is very convenient for reducing the complexity of general matrix operations. However, low-rank and dense updates do not promote sparsity and are not friendly for distributed computing. The same principle holds with deep networks. We illustrate it with a simple example. 

Consider a layer of the first network (left) shown in Figure~\ref{fig:structure_matrix}.  The insight is that if the weight matrix $W \in \mathbb{R}^{mxn}$ for a hidden layer has low-rank structure where rank $r \ll min(m,n)$, then it has a representation $W=UV$ where $U \in \mathbb{R}^{mxr}$ and $V \in \mathbb{R}^{rxn}$. This low-rank structure improves the storage requirements and matrix-multiplication time from $O(mn)$ to $O(mr + rn)$. As shown in Figure \ref{fig:structure_matrix}, there is an equivalent representation of the same network using an intermediate hidden layer that contains $r$ nodes and uses the identity activation function. The weight matrices for the hidden layers in the second network (right) map to the matrix decomposition, $U$ and $V$. \cite{sindhwani2015structured} uses structured matrix transformations with low-rank matrices to reduce the number of parameters for the fully-connected layers of a neural network. 

The new network structure and the equivalent structure matrices require a dense gradient update, which is not ideally suited for data parallelism~\cite{recht2011hogwild}. For example, they need sequential gradient descent updates. In this work, we move away from the low-rank assumption and instead make use of randomness and sparsity to reduce computations. We will later show that due to sparsity, our approach is well suited for Asynchronous Stochastic Gradient Descent (ASGD), leading to near-linear scaling.  

It should be noted that our methodology is very different from the notions of making networks sparse by thresholding (or sparsifying) node weights permanently \cite{srivastava2014dropout, makhzani2013k, makhzani2015winner}. Our approach makes use of every node weight but picks them selectively for different inputs. On the contrary, permanent sparsification only has a static set of weights which are used for every input and rest are discarded. 

\begin{figure} [ht]
\begin{center}
  \includegraphics[width=.50\textwidth]{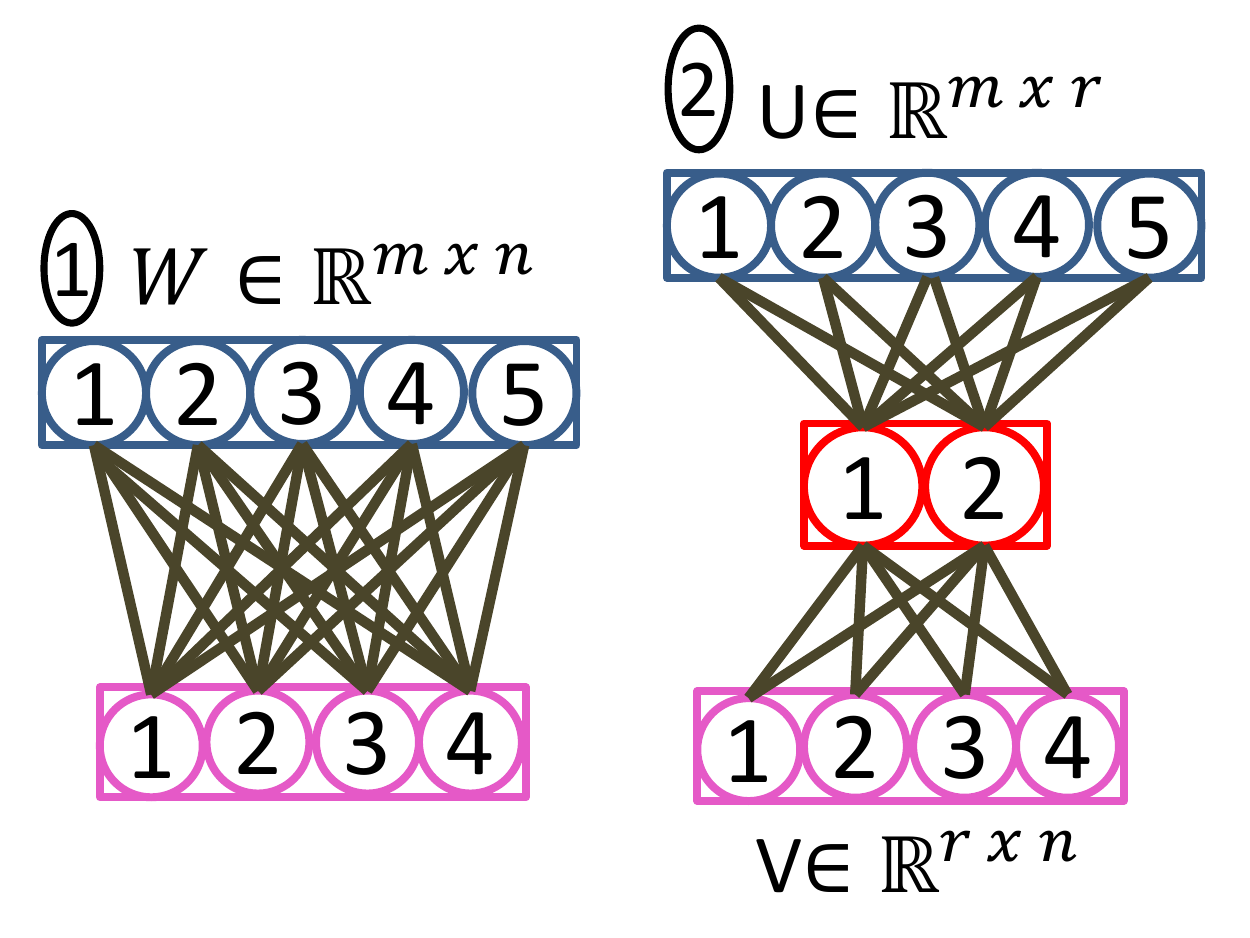}
\end{center}
\begin{caption}
{\bf Illustration of why the Low-Rank assumption for neural networks naturally leads to fewer parameters:} 
(1) This network contains two layers with m and n neurons respectively. The weighted connections between the layers are characterized by the weight matrix $W \in R^{m x n}$ of constrained rank $r$ such that $W = UV$ with $U \in \mathbb{R}^{m x r}$ and $V \in \mathbb{R}^{r x n}$.

(2) An equivalent network contains three layers each with m, r, and n neurons. The top two layers are represented with the matrix $U \in \mathbb{R}^{m x r}$, while the bottom two layers are characterized with the matrix $V \in \mathbb{R}^{r x n}$. The intermediate layer uses the identity activation function $I$ such that the output from network 2 equals that of network 1. $a = f(W^T x) = f((U V)^T x) = f((V^T U^T) x) = f(V^T I(U^T x))$ where $a$ is the layer's output and f is a non-linear activation function.
\end{caption}
\label{fig:structure_matrix}
\end{figure}

\section{Background}\subsection{Artificial Neural Networks}
Neural networks are built from layers of neurons. The combination of multiple hidden layers creates a complex, non-linear function capable of representing any arbitrary function \cite{cybenko1989approximation}. The forward propagation equation is $$a^{l} = f(W^{l} a^{l-1} + b^{l})$$ where $a^{l-1}$ is the output from the previous layer $l-1$, $W^{l}$ is an $[N x M]$ weight matrix for layer $l$, $b^{l}$ is the bias term for layer $l$, and $f$ is the activation function. Common non-linear activation functions are sigmoid, tanh, ReLU. Neural networks are trained using stochastic gradient descent (SGD) - $\theta_{t+1} = \theta_t - \eta \nabla J(\theta)$. The gradient for the neural network $\nabla J(\theta)$ is calculated layer-by-layer using the back-propagation algorithm. The Momentum \cite{polyak1964momentum} technique calculates the parameter update from a weighted sum of the previous momentum term and the current gradient - $\nu_{t+1} = \gamma \nu_t + \eta \nabla J(\theta)$. It reduces the training time of neural networks by mitigating the stochastic nature of SGD. Adaptive gradient descent (Adagrad) \cite{duchi2011adaptive} is a variant of SGD that adapts the learning rate for each parameter. The global learning rate is normalized by the sum of squared gradients. It favors larger learning rates for infrequent or small parameter updates.

\subsection{Adaptive Dropout}
\label{sec:adaptive_dropout}
Adaptive Dropout \cite{ba2013adaptive} is a recently proposed dropout technique that selectively samples nodes with probability proportional to some monotonic function of their activations. To achieve this, authors in \cite{ba2013adaptive} chose a Bernoulli distribution. The idea was that given an input $a_{l-1}$ to a layer $l$, we generate a Bernoulli random variable for every node in the layer with Bernoulli probability $q$ being a monotonic function of the activation. Then, the binary variable sampled from this distribution is used to determine if the associated neuron's activation is kept or set to zero. This adaptive selectivity allows fewer neurons to be activated in the hidden layer and to improve neural network performance over standard Dropout \cite{srivastava2014dropout}. The Winner-Take-All approach \cite{makhzani2015winner} is an extreme case of Adaptive Dropout that takes only the top k\% activations in a hidden layer, which also performs very well.

\subsection{Locality-Sensitive Hashing (LSH)}
Locality-Sensitive Hashing (LSH) \cite{gionis1999similarity, sundaram2013streaming, huang2015query, gao2014dsh, shinde2010similarity} is a popular, sublinear time algorithm for approximate nearest-neighbor search. The high-level idea is to map similar items into the same bucket with high probability. An LSH hash function maps an input data vector to an integer key - $h: x \in \mathbb{R}^D \mapsto [0, 1, 2,\dots, N]$. A collision occurs when the hash values for two data vectors are equal - $h(x) = h(y)$. The probability of collision for an LSH hash function is proportional to the similarity distance between the two data vectors - $Pr[h(x) = h(y)] \propto sim(x,y)$. Essentially, similar items are more likely to collide with the same hash fingerprint and vice versa. 

\subsubsection*{Sub-linear Search with $(K,L)$ LSH Algorithm} To be able to answer approximate nearest-neighbor queries in sub-linear time, the idea is to create hash tables, (see Figure~\ref{fig:algorithm}). Given the collection $\mathcal{C}$, which we are interested in querying for the nearest-neighbor items, the hash tables are generated using the locality sensitive hash (LSH) family.  We assume that we have access to the appropriate locality sensitive hash (LSH) family $\mathcal{H}$ for the similarity of interest. In the classical $(K,L)$ parameterized LSH algorithm, we generate $L$ different hash functions given by  $B_j(x)  = [h_{1j}(x);h_{2j}(x);...;h_{{Kj}}(x)]$. Here $h_{ij}, i \in \{1,2,...,K \}$ and $j \in  \{1,2,...,L \}$, are $KL$ different evaluations of the appropriate locality sensitive hash (LSH) function.  Each of these hash functions is formed by concatenating $K$ sampled hash values from $\mathcal{H}$.

The overall algorithm works in two phases (See~\cite{Report:E2LSH} for details):
\begin{enumerate}
    \itemsep0em
    \item {\bf Pre-processing Phase:} We construct $L$ hash tables from the data by storing all elements $x \in \mathcal{C}$, at location $B_j(x)$ in hash-table $j$ (See Figure~\ref{fig:algorithm} for an illustration). We only store pointers to the vector in the hash tables, as storing data vectors is very memory inefficient.
    \item {\bf Query Phase:} Given a query $Q$, we will search for its nearest-neighbors. We report the union of all the points in the buckets $B_j(Q)$ $\forall j \in \{1,2,...,L\}$, where the union is over $L$ hash tables.  Note, we do not scan all the elements in $\mathcal{C}$, we only probe $L$ different buckets, one from each hash table. 
\end{enumerate}

\cite{Proc:Indyk_STOC98} shows that having an LSH family for a given similarity measure and an appropriate choice of $K$ and $L$, the above algorithm is provably sub-linear. In practice, it is near-constant time because the query only requires few bucket probes.  

\subsubsection*{Hashing Inner Products} It was recently shown that by allowing asymmetric hash functions, the same algorithm can be converted into a sub-linear time algorithm for Maximum Inner Product Search (MIPS) \cite{shrivastava2014asymmetric}. In locality sensitive hashing, self-similarity $Pr[h(x) = h(x)] = 1$ is the closest nearest-neighbor for a query $x$. However, self-similarity does not necessarily correspond with the highest inner product $x^Tx = ||x||^2_2$. Therefore, an asymmetric transformation is necessary to map maximum inner product search (MIPS) to the standard nearest-neighbor search (LSH). Our work heavily leverages these algorithms. For the purpose of this paper, we will assume that we can create a very efficient data structure, which can be queried for large inner products. See~\cite{shrivastava2014asymmetric} for details.

\subsubsection*{Multi-Probe LSH} One common complaint with classical LSH algorithm is that it requires a significant number of hash tables. Large $L$ increases the hashing time and memory cost. A simple solution was to probe multiple "close-by" buckets in every hash table rather than probing only one bucket~\cite{lv2007multi}. Thus, for a given query $Q$, in addition to probing  $B_j(Q)$ in hash table $j$, we also generate several new addresses to probe by slightly perturbing values of $B_j(Q)$. This simple idea significantly reduces the number of tables needed with LSH, and we can work with only a few hash tables. See~\cite{lv2007multi} for more details. 

\section{Proposed Methodology}
\subsection{Intuition}
Winner-Take-All~\cite{makhzani2013k, makhzani2015winner} technique shows that we should only consider a few nodes (top k\%) with large activations for a given input and ignore the rest while computing the feed-forward pass. Furthermore, the back-propagation updates should only be performed on those few chosen weights. Let us denote $AS$ (Active Set) to be the top k\% nodes having significant activations. Let $n$ denote the total number of nodes in the neural network with $AS \ll n$. For one gradient update, Winner-Take-All needs to perform first $O(n \log{} n)$ work to compute $AS$, followed by updating $O(AS)$ weights.  $O(n log n)$ seems quite wasteful. In particular, given the input, finding the active set $AS$ is a search problem, which can be solved efficiently using smart data structures. Furthermore, if the data structure is dynamic and efficient, then the gradient updates will also be efficient. 

For a node $i$ with weight $w_i$ and input $x$, its activation is a monotonic function of the inner product $w_i^Tx$. Thus, given $x$, selecting nodes with large activations (the set $AS$) is equivalent to searching $w_i$, from a collection of weight vectors, have large inner products with $x$. Equivalently from query processing perspective, if we treat input $x$ as a query, then the search problem of selecting top k\% nodes can be done efficiently, in sub-linear time in the number of nodes, using the recent advances in maximum inner product search \cite{shrivastava2014asymmetric}.  Our proposal is to create hash tables with indexes generated by asymmetric locality sensitive hash functions tailored for inner products. With such hash tables, given the query input $x$, we can very efficiently approximate the active set $AS$. 

There is one more implementation challenge. We also have to update the nodes (weights associated with them) in $AS$ during the gradient update. If we can perform these updates in $O(AS)$ instead of $O(n)$, then we save significant computation. Thus, we need a data structure where updates are efficient as well. With years of research in data structures, there are many choices which can make update efficient. We describe our system in details in Section~\ref{sec:system}.  

\subsection{Equivalence with Adaptive Dropouts}
It turns out that using randomized asymmetric locality sensitive hashing \cite{shrivastava2014asymmetric} for finding nodes with large inner products is formally, in a statistical sense, equivalent to adaptive dropouts with a non-trivial sampling distribution. 

Adaptive Dropout technique uses the Bernoulli distribution (Section~\ref{sec:adaptive_dropout}) to adaptively sample nodes with large activation. In theory, any distribution assigning probabilities in proportion to nodes activation is sufficient. We argue that Bernoulli is a sub-optimal choice. There is another non-intuitive, but a very efficient choice. This choice comes from the theory of Locality-Sensitive Hashing (LSH) \cite{Proc:Indyk_STOC98}, which is primarily considered a black box technique for fast sub-linear search. Our core idea comes from the observation that, for a given search query $q$, the $(K,L)$ parametrized LSH algorithm \cite{Report:E2LSH} inherently samples, in sub-linear time, points from datasets with probability proportional to $1 - (1-p^K)^L$. Here $p$, is the collision probability, a monotonic function of the similarity between the query and the retrieved point. See \cite{Report:E2LSH} for details.  The expression holds for any $K$ and $L$. $K$ controls the sparsity of buckets. With a reasonable choice of $K$, we can always make buckets arbitrarily sparse. Even if the buckets are heavy, we can always do a simple random sampling, in the buckets, and adding only a constant factor in the probability $p$, keeping the monotonicity intact. 

\begin{theorem} 
 {\bf Hash-Based Sampling} - For a given input $x$ to any layer, any $(K,L)$ parametrized hashing algorithm selects a node $i$, associated with weight vector $w_i$, with probability proportional to  $1-(1-p^K)^L$. Here $p = Pr(h(I) = h(w_i))$ is the collision probability of the associated locality sensitive hash function. The function $1- (1-p^K)^L$ is monotonic in $p$.
\end{theorem}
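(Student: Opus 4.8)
The plan is to prove the statement directly from the construction of the $(K,L)$ LSH algorithm recalled above, by identifying the event ``node $i$ is selected for input $x$'' with the event that the pointer to $w_i$ lies in at least one of the $L$ probed buckets $B_j(x)$. First I would fix the input $x$, playing the role of the query, and a node $i$ with weight vector $w_i$, and write $p = \Pr[h(x) = h(w_i)]$ for a single hash $h$ drawn from the family $\mathcal{H}$. Under the asymmetric MIPS transformation of \cite{shrivastava2014asymmetric} this $p$ is a monotone function of the inner product $w_i^T x$, hence of the node's activation, but for the proof itself $p$ can be treated as a black-box collision probability.

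The core of the argument is a two-level independence computation. Within a single hash table $j$, the meta-hash $B_j(\cdot) = [h_{1j}(\cdot); h_{2j}(\cdot); \dots; h_{Kj}(\cdot)]$ concatenates $K$ hash functions drawn independently from $\mathcal{H}$, so $w_i$ collides with $x$ in table $j$ precisely when $h_{kj}(x) = h_{kj}(w_i)$ for every $k \in \{1, \dots, K\}$; by independence this event has probability $p^K$, and its complement has probability $1 - p^K$. Since the $L$ tables are built from disjoint, independently sampled sets of hash functions, the ``no collision in table $j$'' events are mutually independent across $j$, so the probability that $w_i$ is absent from every one of the $L$ probed buckets is $(1 - p^K)^L$, and taking the complement gives selection probability $1 - (1 - p^K)^L$. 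If some bucket is too heavy and we subsample its contents uniformly at a fixed rate, this only multiplies $p^K$ inside the expression by a constant factor, leaving the functional form and its monotonicity intact; this is why the statement reads ``proportional to'' rather than ``equal to''.

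Finally I would verify monotonicity of $g(p) = 1 - (1 - p^K)^L$ on $[0,1]$ by differentiation: $g'(p) = K L\, p^{K-1} (1 - p^K)^{L-1} \ge 0$ for all integers $K, L \ge 1$ and all $p \in [0,1]$, so $g$ is non-decreasing, and hence nodes with larger collision probability --- equivalently larger activation --- are retained with weakly higher probability, which is exactly the adaptive-dropout behaviour claimed. I do not expect a genuine obstacle here, since the whole argument is elementary probability; the only points requiring care are making the independence assumptions explicit (the $KL$ instances $h_{ij}$ are drawn i.i.d.\ from $\mathcal{H}$, exactly the standard $(K,L)$-LSH setup of \cite{Report:E2LSH}) and being precise that ``selects node $i$'' refers to membership of the pointer to $w_i$ in $\bigcup_{j=1}^{L} B_j(x)$, so that duplicate appearances across tables do not affect the probability.
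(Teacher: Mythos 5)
Your proposal is correct and follows essentially the same route as the paper's own (much terser) proof: per-table collision probability $p^K$ by independence of the $K$ concatenated hashes, miss probability $(1-p^K)^L$ by independence across the $L$ tables, complement, and monotonicity of $1-(1-p^K)^L$ in $p$. Your explicit derivative check and your (more accurate) treatment of bucket subsampling as a constant factor on the retrieval probability are refinements of, not departures from, the paper's argument.
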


\begin{proof}
The probability of finding node $i$ in any hash table is $p^K$ (all $K$ bits collide where each bit occurs with probability $p$), so the probability of missing node $i$ in $L$ independent hash tables is $(1-p^K)^L$. If we further choose to sample the buckets randomly by another constant factor $r < 1$, then effective collision probability becomes $p' = rp$.
\end{proof}

With recent advances in hashing inner products, we can make $p$ a monotonic function of inner products \cite{shrivastava2014asymmetric, shrivastava2014improved, shrivastava2015asymmetric,shrivastava2015probabilistic}. Translating into deep network notation, given the activation vector of previous layer $a^{l-1}$ as the query and layer's weights $W^{l}$ as the search space, we can sample with probability proportional to $$g(W^{l}) = 1-(1-f(W^{l} a^{l-1} )^K)^L$$ which is a monotonic function of node activation. Thus, we can naturally integrate adaptive dropouts with hashing. Overall, we are simply choosing a specific form of adaptive dropouts, which leads to efficient sampling in time sub-linear in the number of neurons (or nodes). 

\begin{corollary}
{\bf Adaptive Dropouts are Efficient} - There exists an efficient family of sampling distributions for adaptive dropouts, where the sampling and updating cost is sub-linear in the number of nodes. In particular, we can construct an efficient sampling scheme such that for any two nodes $i$ and $j$, in any given layer $l$, the probability of choosing node $i$ is greater than the probability of choosing node $j$ if and only if the current activation of node $i$ is higher than that of $j$.
\end{corollary}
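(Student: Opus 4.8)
The plan is to obtain the corollary directly from the Hash-Based Sampling theorem by chaining three monotonic relationships and then separately accounting for the per-iteration cost. First I would fix a layer $l$ and treat the incoming activation vector $x = a^{l-1}$ as the query. By the theorem, the $(K,L)$ hashing scheme selects node $i$ (with weight vector $w_i$) with probability proportional to $\phi(p_i) = 1-(1-p_i^K)^L$, where $p_i = Pr(h(x) = h(w_i))$ is the collision probability of the asymmetric LSH family used for maximum inner product search.

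Next I would establish the monotonicity chain. The function $\phi(p) = 1-(1-p^K)^L$ is \emph{strictly} increasing on $(0,1)$ for any fixed positive integers $K, L$ (its derivative is $KL\,p^{K-1}(1-p^K)^{L-1} > 0$); this is the monotonicity already asserted in the theorem, now used in its strict form. By the asymmetric MIPS construction of \cite{shrivastava2014asymmetric}, after the norm-fixing transformation the collision probability $p_i$ is a strictly increasing function of the inner product $w_i^T x$. Finally the activation of node $i$ is $f(w_i^T x)$ with $f$ a monotone activation (sigmoid, tanh, \dots), so a larger activation corresponds to a larger $w_i^T x$. Composing the three maps, the selection probability of node $i$ is a strictly increasing function of its activation, which is exactly the stated "if and only if": $Pr(\text{select } i) > Pr(\text{select } j)$ precisely when the activation of $i$ exceeds that of $j$. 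This composite distribution is the claimed efficient family of adaptive-dropout sampling distributions.

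For the efficiency half I would invoke the two properties of the $(K,L)$ LSH data structure recalled in the Background section. Sampling: answering the large-inner-product query probes only $L$ buckets, one per table, and by the guarantees of \cite{Proc:Indyk_STOC98} a suitable choice of $K$ and $L$ makes the total retrieval cost sublinear in the number of nodes $n$ (and near-constant in practice, e.g.\ with multi-probe LSH \cite{lv2007multi}). Updating: only the retrieved active set $AS$ participates in back-propagation, so only those $|AS|$ weight vectors change; each must be re-hashed and moved between buckets, and since hash-table insertion and deletion are $O(1)$, the update cost is $O(KL\,|AS|)$, which is again sublinear in $n$ whenever $|AS| \ll n$.

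The main obstacle is not the monotonicity bookkeeping — a routine composition of increasing functions — but keeping the "if and only if" genuinely tight. Strictness can fail at the boundary: a saturating or piecewise-constant activation (ReLU with both pre-activations negative) can send two distinct inner products to the same activation value, and a collision probability that is only weakly monotone in the transformed coordinates would similarly flatten the ordering. I would handle this by stating the equivalence for activations in the regime where $f$ is strictly monotone and by using the norm-preserving form of the MIPS transformation, so that $p_i$ depends strictly monotonically on $w_i^T x$; a short remark would note that outside this regime the claim degrades gracefully from "$>$" to "$\ge$".
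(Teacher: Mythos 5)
Your proposal is correct and follows essentially the same route as the paper's own proof: compose the monotonicity of the activation in the inner product, of the MIPS collision probability in the inner product, and of $1-(1-p^K)^L$ in $p$, with the sublinearity of sampling and updates inherited from the $(K,L)$ LSH data structure. Your caveat about strictness is in fact more careful than the paper, which asserts that ``the inverse function exists and is also monotonic''---a claim that fails for ReLU on negative pre-activations, exactly the boundary case you flag.
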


\begin{proof}
Since asymmetric LSH functions \cite{shrivastava2014asymmetric, shrivastava2014improved, shrivastava2015asymmetric} have a collision probability $p$ that is a monotonic function of the inner product, it is also a montonic function of the neuron's activation. All activation functions including sigmoid \cite{funahashi1989approximate}, tanh \cite{lecun2012efficient}, and ReLU \cite{glorot2011deep} are monotonic functions of inner products. Composing monotonic functions retains monotonicity. Therefore, the corollary follows because $1-(1-p^K)^L$ is monotonic with respect to $p$. In addition, monotonicity goes in both direction as the inverse function exists and it is also monotonic.
\end{proof}

\begin{figure}[ht]
\mbox{
  \includegraphics[width=.50\textwidth]{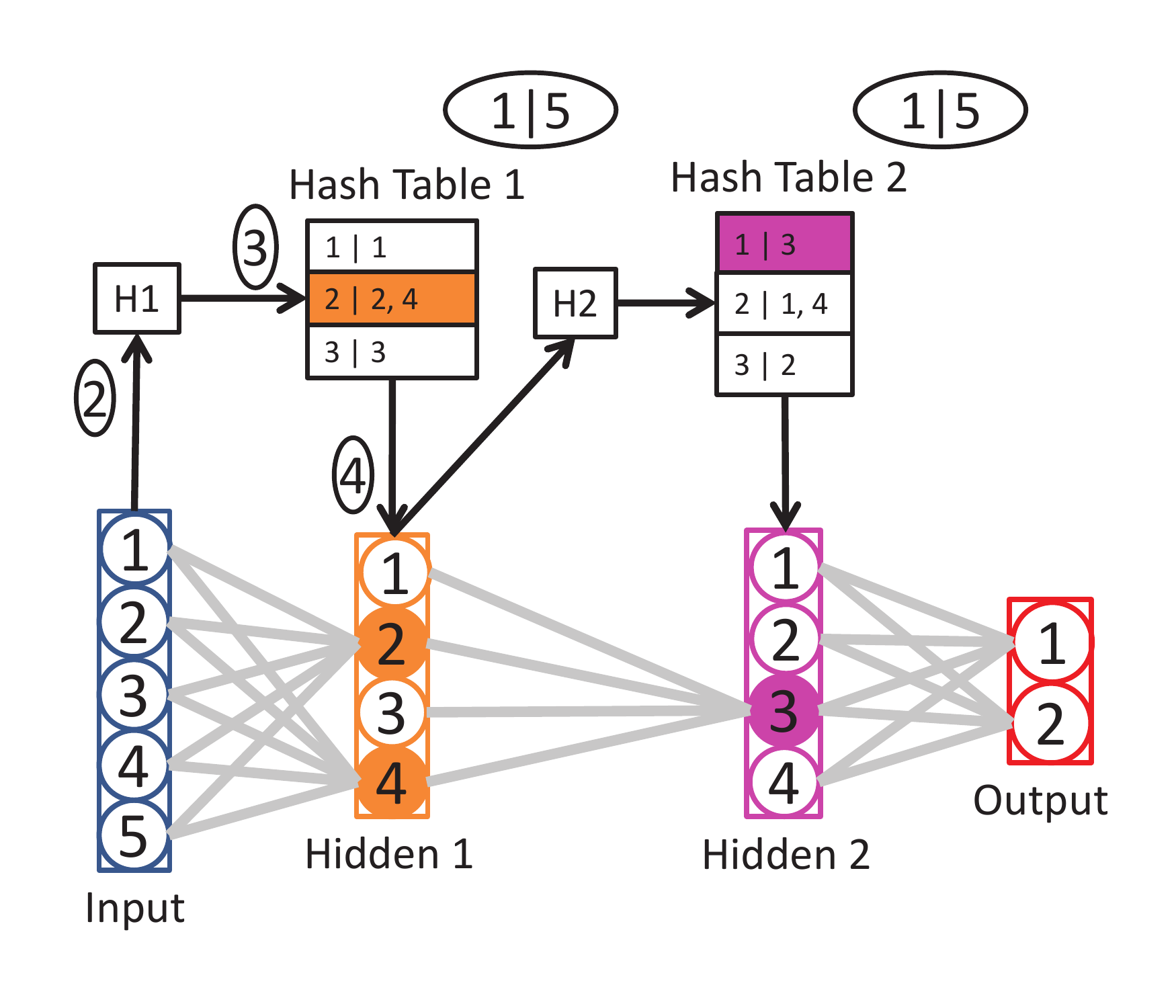}} \vspace{-0.3in}
%\begin{caption}
\caption{{\bf A visual representation of a neural network using randomized hashing}
(1) Build the hash tables by hashing the weights of each hidden layer (one time operation)
(2) Hash the layer's input using the layer's randomized hash function
(3) Query the layer's hash table(s) for the active set $AS$
(4) Only perform forward and back-propagation on the neurons in the active set. The solid-colored neurons in the hidden layer are the active neurons.
(5) Update the $AS$ weights and the hash tables by rehashing the updated weights to new hash locations. }
%\end{caption}
  \label{fig:algorithm}
\end{figure}

\subsection{Hashing Based Back-Propagation}
\label{sec:system}
As argued, our approach is simple. We use randomized hash functions to build hash tables from the nodes in each hidden layer. We sample nodes from the hash table with probability proportional to the node's activation in sub-linear time. We then perform forward and back propagation only on the active nodes retrieved from the hash tables. We later update the hash tables to reorganize only the modified weights. 

Figure \ref{fig:algorithm} illustrates an example neural network with two hidden layers, five input nodes, and two output nodes. Hash tables are built for each hidden layer, where the weighted connections for each node are hashed to place the node in its corresponding bucket. Creating hash tables to store all the initial parameters is a one-time operation which requires cost linear in the number of parameters. 

During a forward propagation pass, the input to the hidden layer is hashed with the same hash function used to build the hidden layer's hash table. The input's fingerprint is used to collect the active set $AS$ nodes from the hash table. The hash table only contains pointers to the nodes in the hidden layer. Then, forward propagation is performed only on the nodes in the active set $AS$. The rest of the hidden layer's nodes, which are not part of the active set, are ignored and are automatically switched off without even touching them. On the back propagation pass, the active set is reused to determine the gradient and to update the parameters. We rehash the nodes in each hidden layer to account for the changes in the network during training. 

In detail, the hash function for each hidden layer is composed of $K$ randomized hash functions. We use the sign of an asymmetrically transformed random projection (see~\cite{shrivastava2014improved} for details) to generate the $K$ bits for each data vector. The $K$ bits are stored together efficiently as an integer, forming a fingerprint for the data vector. We create a hash table of $2^K$ buckets, but we only keep the nonempty buckets to minimize the memory footprint (analogous to hash-maps in Java). Each bucket stores pointers to the nodes whose fingerprints match the bucket's id instead of the node itself. In figure \ref{fig:algorithm}, we showed only one hash table, which is likely to miss valuable nodes in practice. In our implementation, we generate $L$ hash tables for each hidden layer, and each hash table has an independent set of $K$ random projections. Our final active set from these $L$ hash tables is the union of the buckets selected from each hash table. For each layer, we have $L$ hash tables. Effectively, we have two tunable parameters, $K$ bits and $L$ tables to control the size and the quality of the active sets. The $K$ bits increase the precision of the fingerprint, meaning the nodes in a bucket are more likely to generate higher activation values for a given input. The $L$ tables increase the probability of finding useful nodes that are missed because of the randomness in the hash functions.

\begin{algorithm}[tb]
   \caption{Deep Learning with Randomized Hashing}
   \label{alg:algorithm}
\begin{algorithmic}
   \STATE // HF$_l$ - Layer $l$ Hash Function
   \STATE // HT$_l$ - Layer $l$ Hash Tables
   \STATE // AS$_l$ - Layer $l$ Active Set
   \STATE // $\theta^{l}_{AS} \in W^{l}_{AS}$, $b^{l}_{AS}$ - Layer $l$ Active Set parameters
   \STATE Randomly initialize parameters $W^l$, $b^l$ for each layer $l$
   \STATE HF$_l$ = constructHashFunction($k$, $L$)
   \STATE HT$_l$ = constructHashTable($W^l$, HF$_l$)
   \WHILE {not stopping criteria}
   \FOR {{\bf each} training epoch}
   \STATE // Forward Propagation
   \FOR {layer $l = 1 \dots N$}
   \STATE fingerprint$_l$ = HF$_l(a_l)$
   \STATE AS$_l$ = collectActiveSet(HT$_l$, fingerprint$_l$)
   \FOR {{\bf each} node $i$ in AS$_l$}
   \STATE $a^{l+1}_{i} = f(W^{l}_{i}a^{l}_{i} + b^{l}_{i})$
   \ENDFOR
   \ENDFOR
   \STATE // Backpropagation
   \FOR {layer $l = 1 \dots N$}
   \STATE $\Delta J(\theta^{l}_{AS})$ = computeGradient($\theta^{l}_{AS}$, AS$_l$)
   \STATE $\theta^{l}_{AS}$ = updateParameters($\theta^{l}_{AS}$, $\Delta J(\theta_{AS}$))
   \ENDFOR
   \STATE {\bf for each} Layer $l$ -> updateHashTables(HF$_l$, HT$_l$, $\theta^{l}$)
   \ENDFOR
   \ENDWHILE
\end{algorithmic}
\end{algorithm}
\subsection{Efficient Query and Updates}

Our algorithm critically depends on the efficiency of the query and update procedure. The hash table is one of the most efficient data structures, so this is not a difficult challenge. Querying a single hash table is a constant time operation when the bucket size is small. Bucket size can be controlled by $K$ and by sub-sampling the bucket. There is always a possibility of crowded buckets due to bad randomness or because of too many near-duplicates in the data. These crowded buckets are not very informative and can be safely ignored or sub-sampled. 

We never need to store weights in the hash tables. Instead, we only store references to the weight vectors, which makes hash tables a very light entity. Further, we reduce the number of different hash tables required by using multi-probe LSH \cite{lv2007multi}.

Updating a weight vector $w_i$ associated with a node $i$ can require changing the location of $w_i$ in the hash table, if the updates lead to a change in the hash value. With the hash table data structure, the update is not an issue when the buckets are sparse.  Updating a weight only requires one insertion and one deletion in the respective buckets. There are plenty of choices for efficient insert and delete data structure for buckets. In theory, even if buckets are not sparse, we can use a red-black-tree \cite{cormen2009introduction} to ensure both insertion and deletion cost is logarithmic in the size of the bucket. However, using simple arrays when the buckets are sparse is preferable because they are easy to parallelize. With arrays insertion is O(1) and deletion is $O(b)$, where $b$ is the size of buckets. Controlling the size of $b$ can be easily achieved by sub-sampling the bucket. Since we create $L$ independent hash tables, even for reasonably large $L$, the process is quite robust to cheap approximations. We further reduce the size of $L$ using multi-probing \cite{lv2007multi}. Multi-probe with binary hash function is quite straightforward. We just have to randomly flip few bits of the $K$-bit hash to generate more addresses. 

There are plenty of other choices than can make hashing significantly faster, such as cheap re-ranking~\cite{shrivastava2012fast}. See~\cite{Shrivastava:SOCC_16}, where authors show around 500x reduction in computations for image search by incorporating different algorithmic and systems choices. 

\subsection{Overall Cost}
In every layer, during every Stochastic Gradient Descent (SGD) update, we compute $K \times L$ hashes of the input, probe around $10L$ buckets, and take their union. In our experiments, we use $K=6$ and $L=5$, i.e. 30 hash computations only. There are many techniques to further reduce this hashing cost~\cite{achlioptas2001database,li2006very,Proc:OneHashLSH_ICML14,Proc:Shrivastava_UAI14,Shrivastava:NIPS_16}. We probe around 10 buckets in each hash tables for obtaining 5\% of active nodes, leading to the union of 50 buckets in total. The process gives us the active set $AS$ of nodes, which is usually significantly smaller when compared to the number of nodes $n$. During SGD, we update all of the weights in $AS$ along with the hash table. Overall, the cost is of the order of the number of nodes in $AS$, which in our experiments show can go as low as $5\%$. For 1000 node in the layer, we have to update around 10-50 nodes only. The bottleneck cost is calculations of activations (actual inner products) of these nodes in the $AS$ which for every node is around $1000$ floating point multiplications each. The benefits will be even more significant for larger networks. 

\subsection{Bonus: Sparse Updates can be Parallelized}
As mentioned, we only need to update the set of weights associated with nodes in the active set $AS$. If the $AS$ is very sparse, then it is unlikely that multiple SGD updates will overwrite the same set of weights. Intuitively, assuming enough randomness in the data vector, we will have a small active set $AS$ chosen randomly from among all the nodes. It is unlikely that multiple active sets of randomly selected data vectors will have significant overlaps. Small overlaps imply fewer conflicts while updating. Fewer conflicts while updating is an ideal ground where SGD updates can be parallelized without any overhead. In fact, it was both theoretically and experimentally that random and sparse SGD updates can be parallelized without compromising with the convergence ~\cite{recht2011hogwild}. Parallel SGD updates is one the pressing challenges in large-scale deep learning systems~\cite{chen2016revisiting}. Vanilla SGD for deep networks is sequential, and parallel updates lead to poor convergence due to significant overwrites. Our experimental results, in Section~\ref{sec:ASGD}, support these known phenomena. Exploiting this unique property,  we show near linear scaling of our algorithm with increasing processors without hurting convergence. 

\section{Evaluations}
We design experiments to answer the following six important questions: 
\begin{enumerate}
    \item How much computation can we reduce without affecting the vanilla network's accuracy? 
    \item How effective is adaptive sampling compared to a random sampling of nodes?
    \item How does approximate hashing compare with expensive but exact approaches of adaptive dropouts~\cite{ba2013adaptive} and Winner-Takes-all~\cite{makhzani2013k, makhzani2015winner} in terms of accuracy? In particular, is hashing working as intended? 
    \item How is the network's convergence affected by increasing number of processors as we perform parallel SGD updates using our approach?
    \item Is sparse update necessary? Is there any deterioration in performance, if we perform vanilla dense updates in parallel?
    \item  What is the wall clock decrease in training time, as a function of increasing number of processors?
\end{enumerate}

For evaluation, we implemented the following five approaches to compare and contrast against.
\begin{itemize}
    \item Standard (NN) : A full-connected neural network
    \item Dropout (VD) \cite{srivastava2014dropout}: A neural network that disables the nodes of a hidden layer using a fixed probability threshold
    \item Adaptive Dropout (AD) \cite{ba2013adaptive}: A neural network that disables the nodes of a hidden layer using a probability threshold based on the inner product of the node’s weights and the input.
    \item Winner Take All (WTA) \cite{makhzani2013k, makhzani2015winner}: A neural network that sorts the activations of a hidden layer and selects the k\% largest activations
    \item Randomized Hashing (LSH): A neural network that selects nodes using randomized hashing. A hard threshold limits the active node set to k\% sparsity
\end{itemize}

\subsection{Datasets}
To test our neural network implementation, we used four publicly available datasets - MNIST8M \cite{loosli-canu-bottou-2006}, NORB \cite{lecun2004learning}, CONVEX, and RECTANGLES \cite{larochelle2007empirical}. The statistics of these datasets are summarized in Table~\ref{fig:dataset_size}. The MNIST8M, CONVEX, and RECTANGLES datasets contain 28x28 images, forming 784-dimensional feature vectors. The MNIST8M task is to classify each handwritten digit in the image correctly. It is derived by applying random deformations and translations to the MNIST dataset. The CONVEX dataset objective is to identify if a single convex region exists in the image. The goal for the RECTANGLES dataset is to discriminate between tall and wide rectangles overlaid on a black and white background image. The NORB dataset \cite{lecun2004learning} contains images of 50 toys, belonging to 5 categories under various lighting conditions and camera angles. Each data point is a 96x96 stereo image pair. We resize the image from 96x96 to 32x32 and concatenate the image pairs together to form a 2048-dimensional feature vector.

\begin{figure}
    \begin{center}
    \begin{tabular}{ |c|c|c| } 
    \hline
    Dataset & Train Size & Test Size \\ 
    \hline
    MNIST8M & 8,100,000 & 10,000 \\ 
    NORB & 24,300 & 24,300 \\
    Convex & 8,000 & 50,000 \\
    Rectangles & 12,000 & 50,000 \\
    \hline
    \end{tabular}
    \end{center}
    \caption{Dataset - Training + Test Size}
    \label{fig:dataset_size}
\end{figure}

\begin{figure*}[ht]
\begin{center}
\mbox{\hspace{-0.185in}
\includegraphics[width=1.9in]{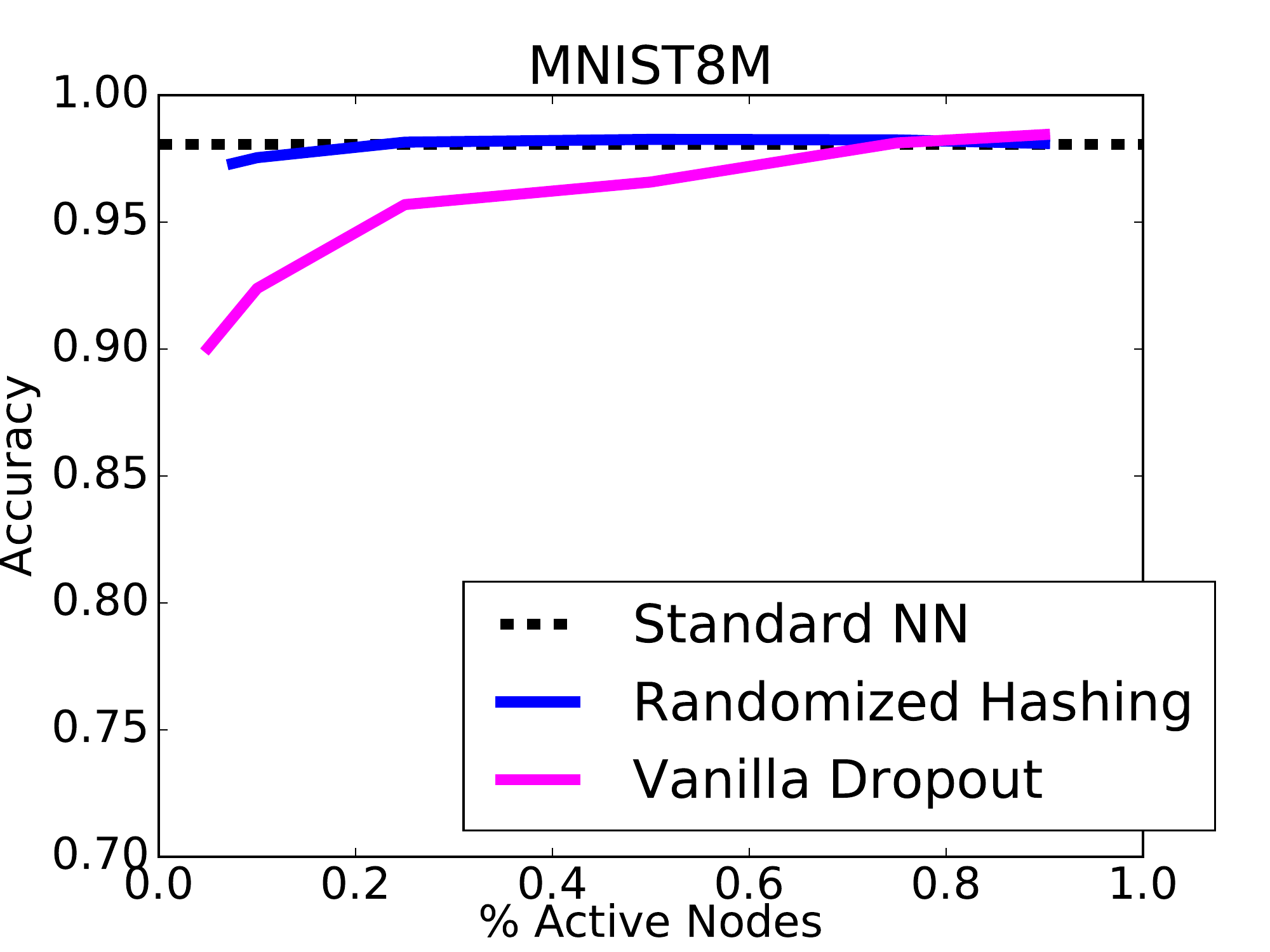} \hspace{-0.15in}
\includegraphics[width=1.9in]{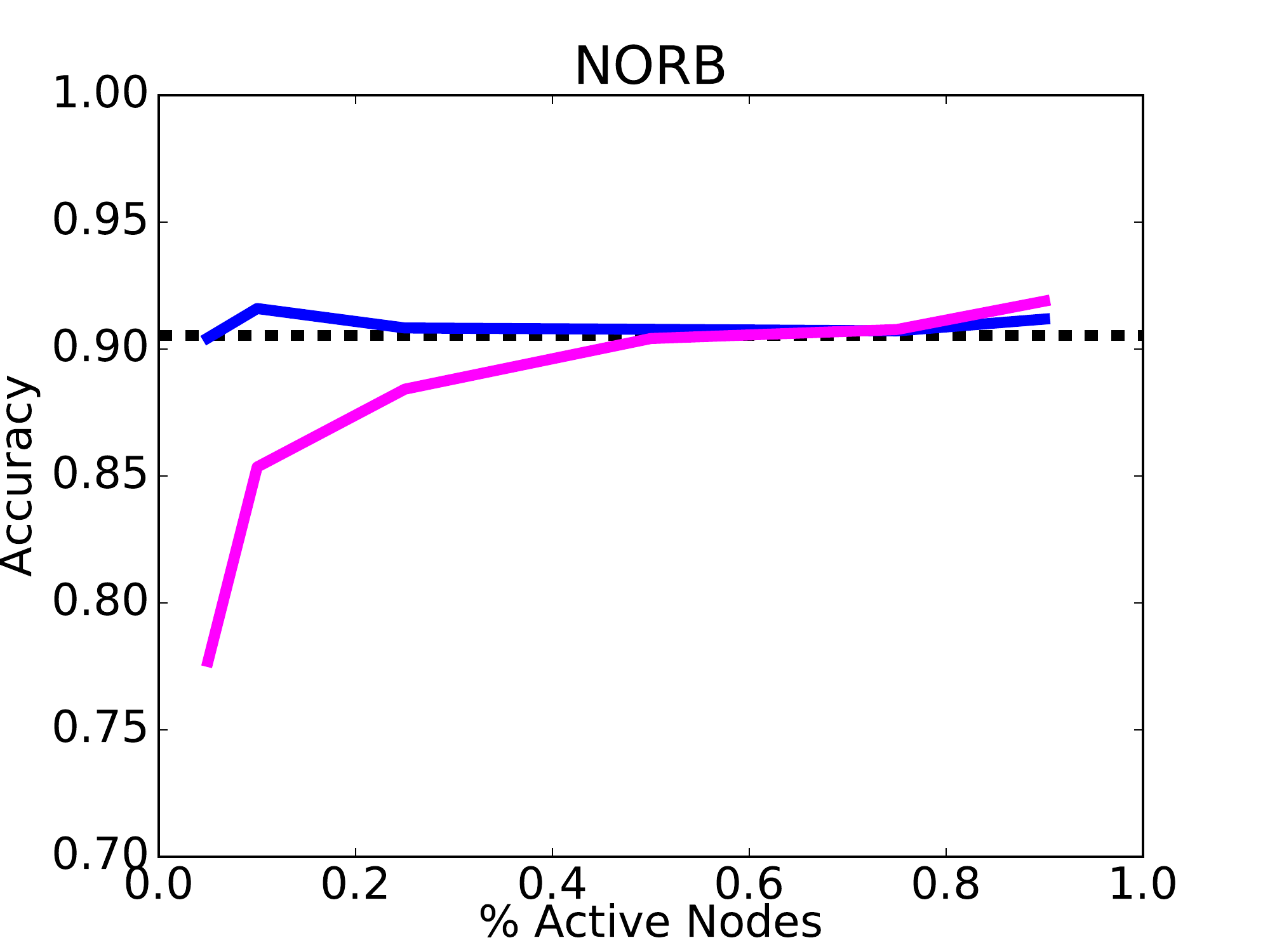} \hspace{-0.19in}
\includegraphics[width=1.9in]{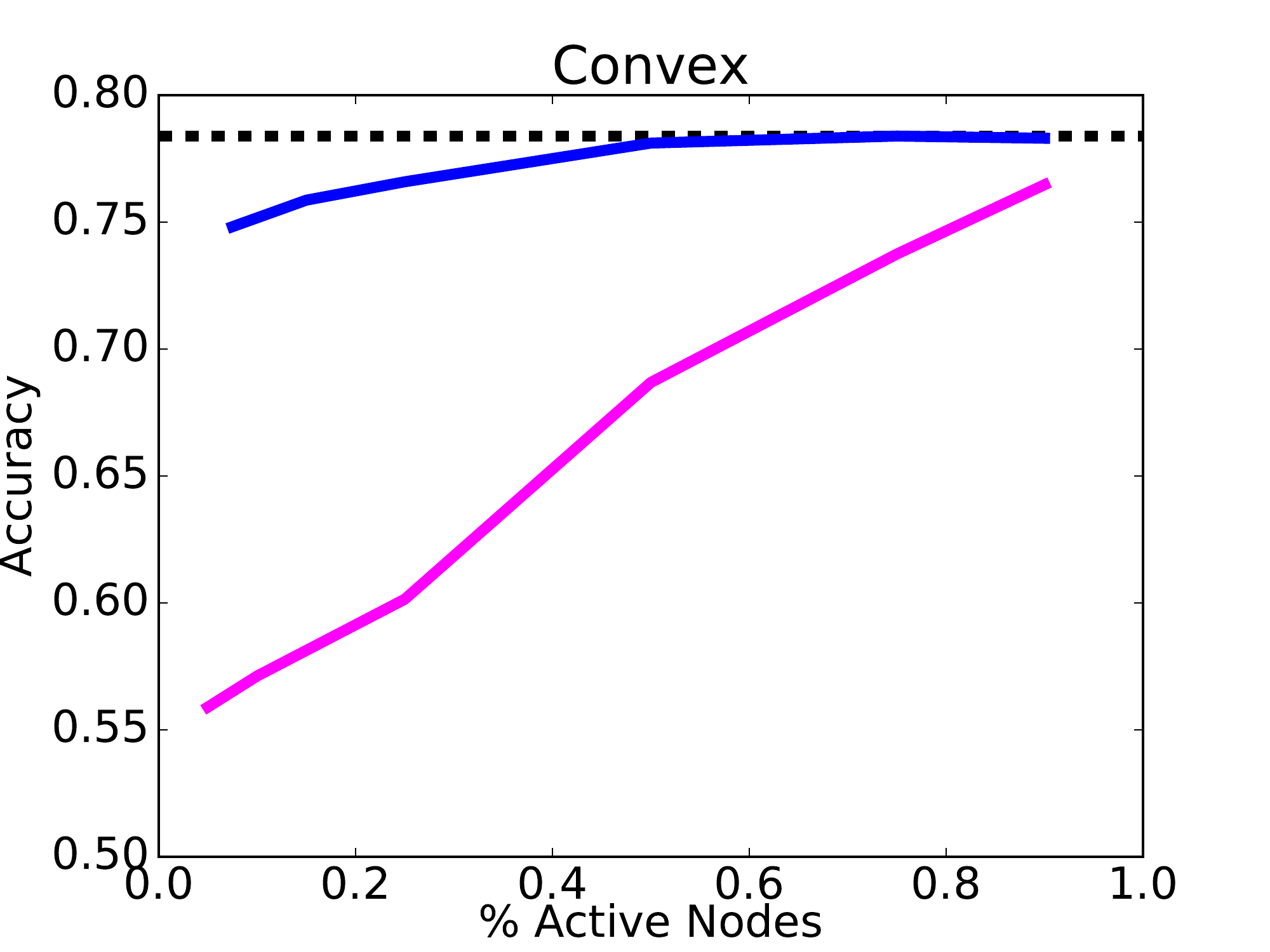} \hspace{-0.19in}
\includegraphics[width=1.9in]{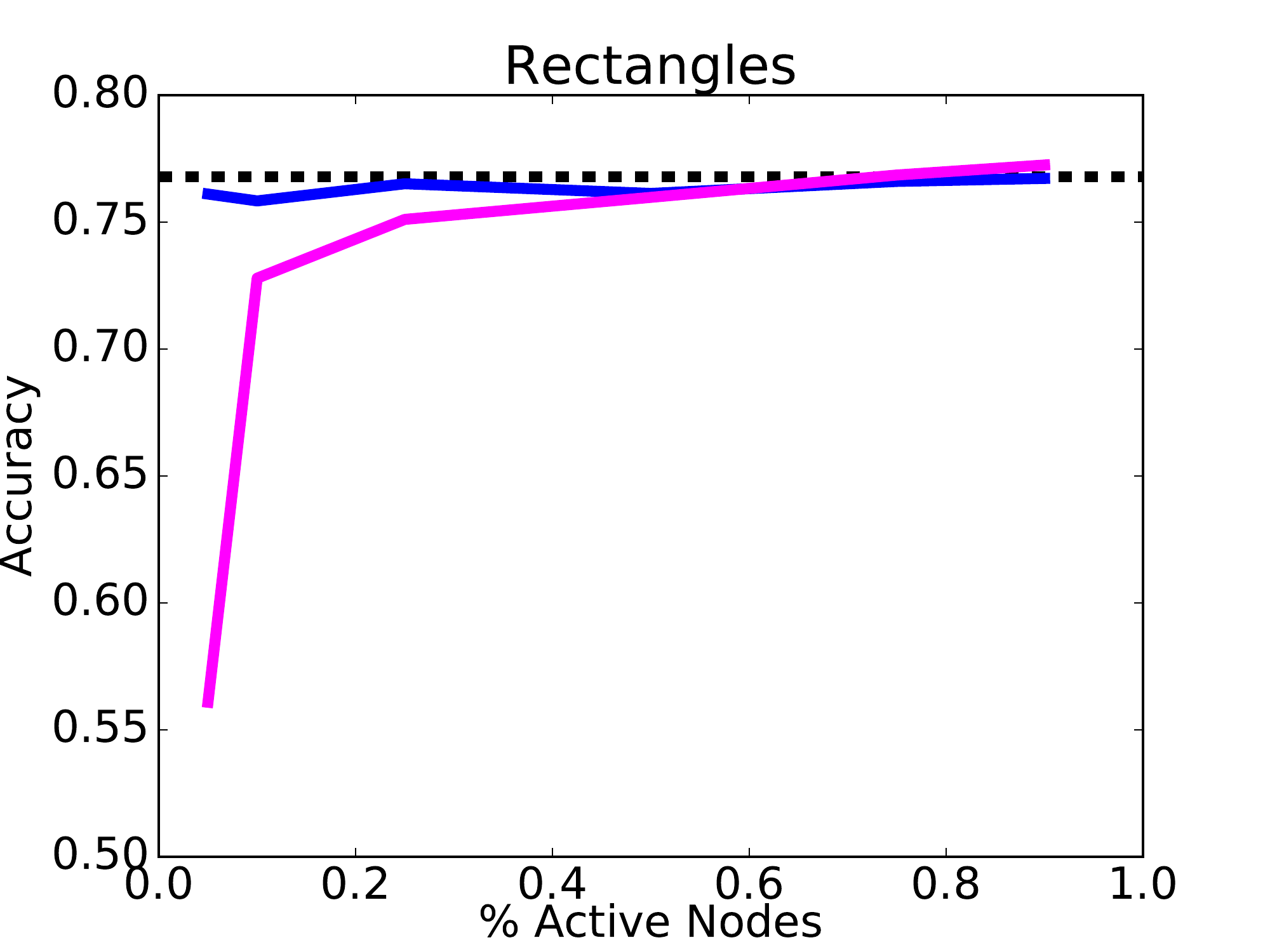} \hspace{-0.19in}}
\mbox{\hspace{-0.185in}
\includegraphics[width=1.9in]{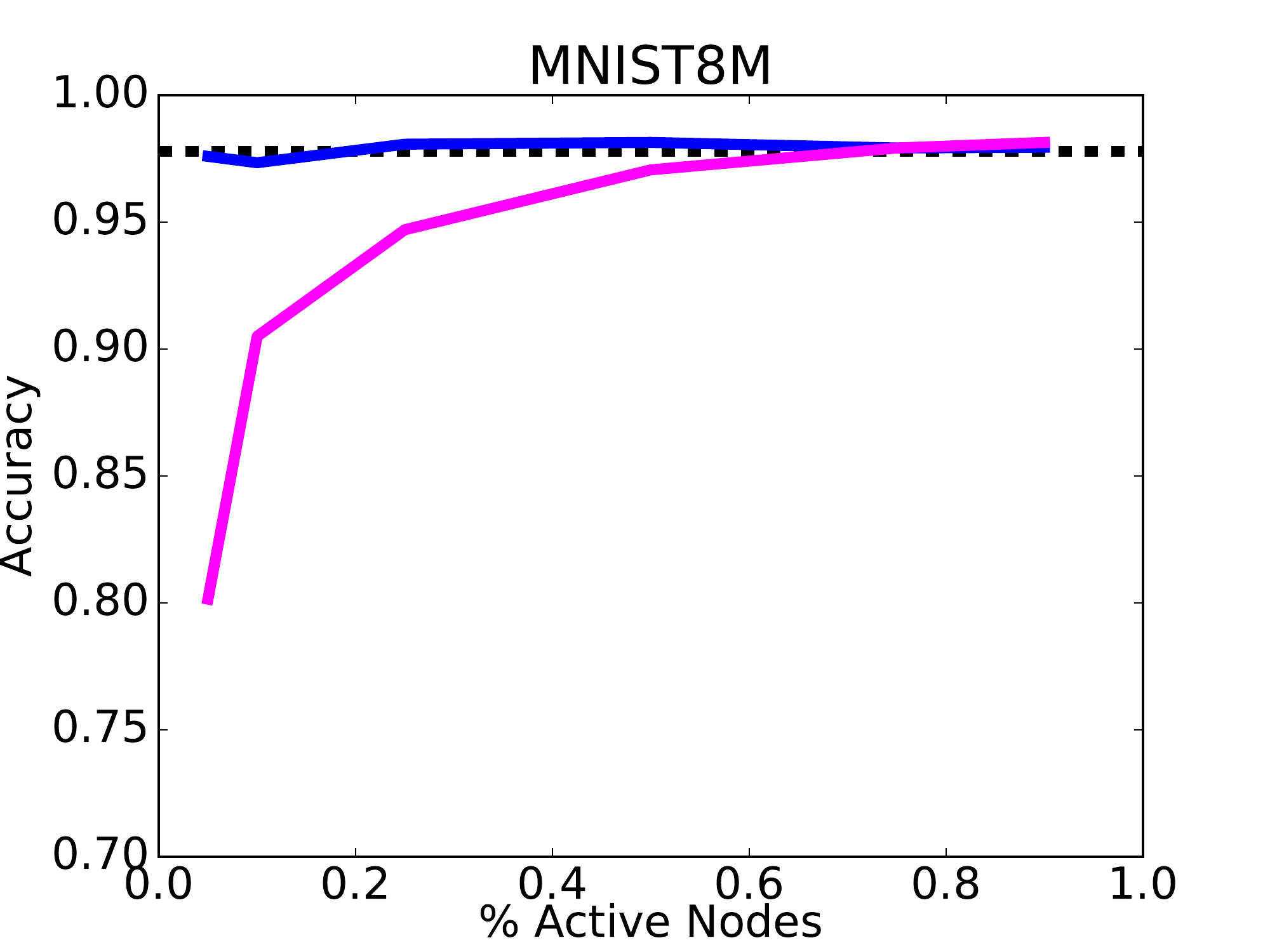} \hspace{-0.15in}
\includegraphics[width=1.9in]{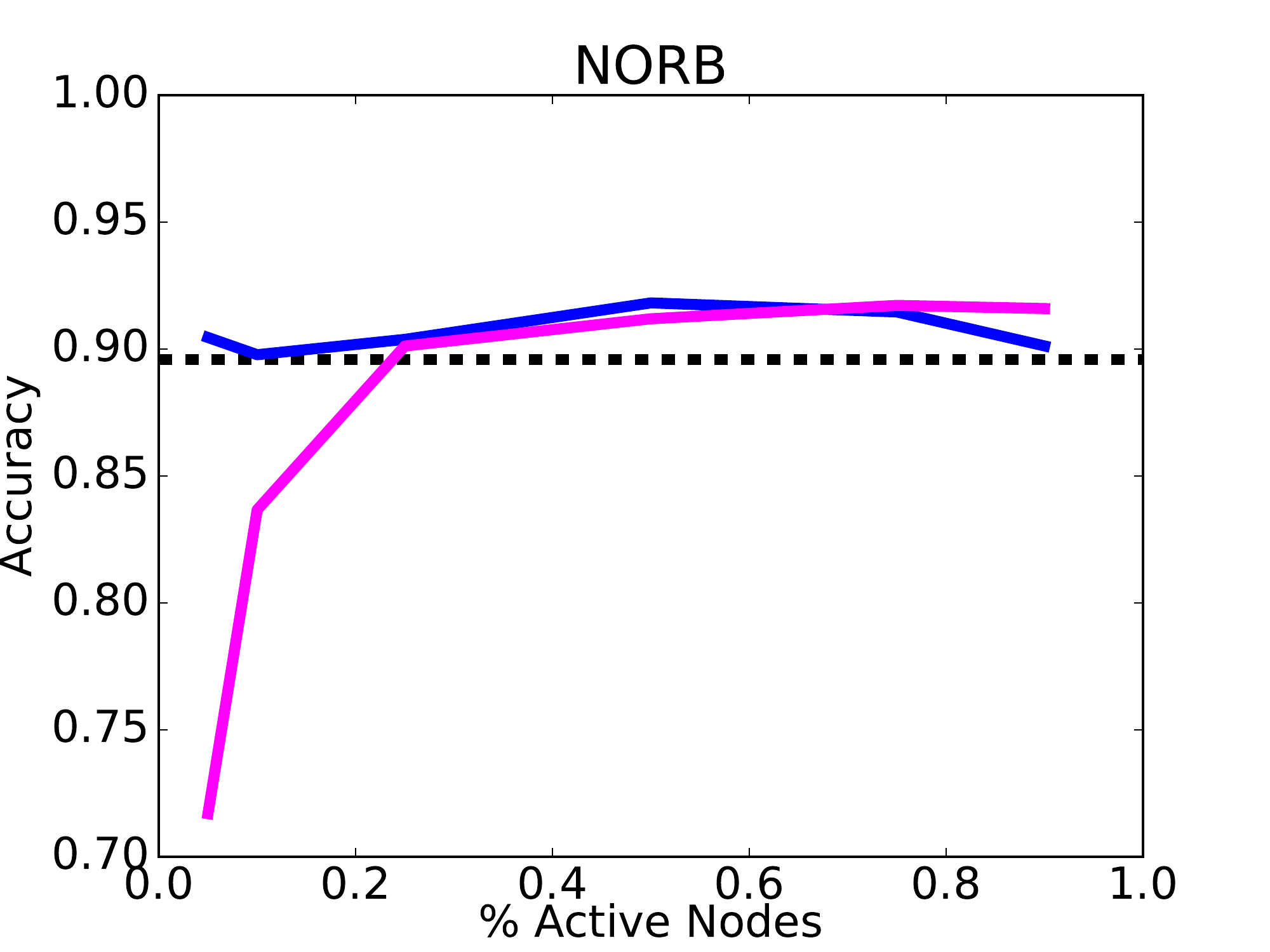} \hspace{-0.19in}
\includegraphics[width=1.9in]{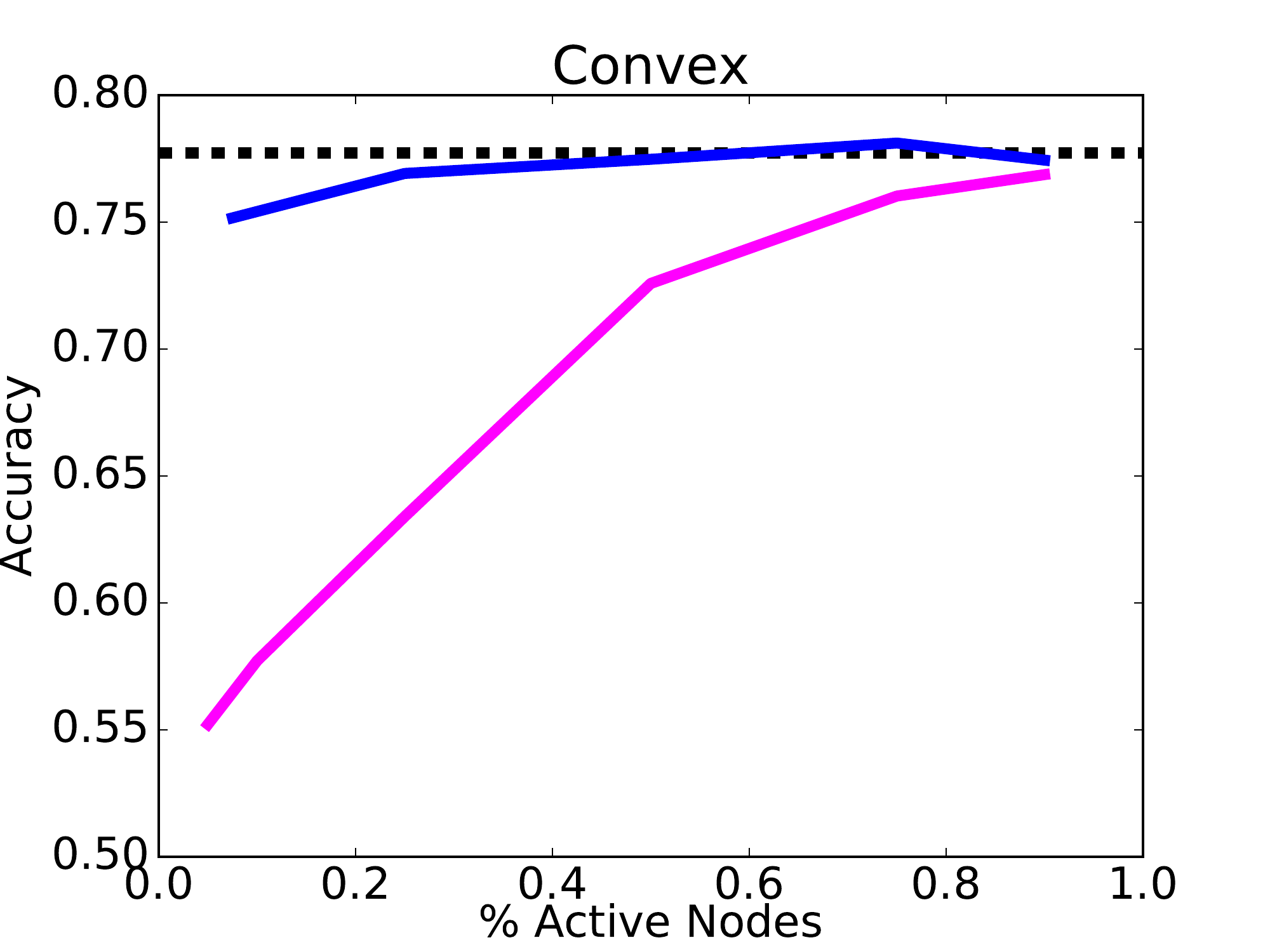} \hspace{-0.19in}
\includegraphics[width=1.9in]{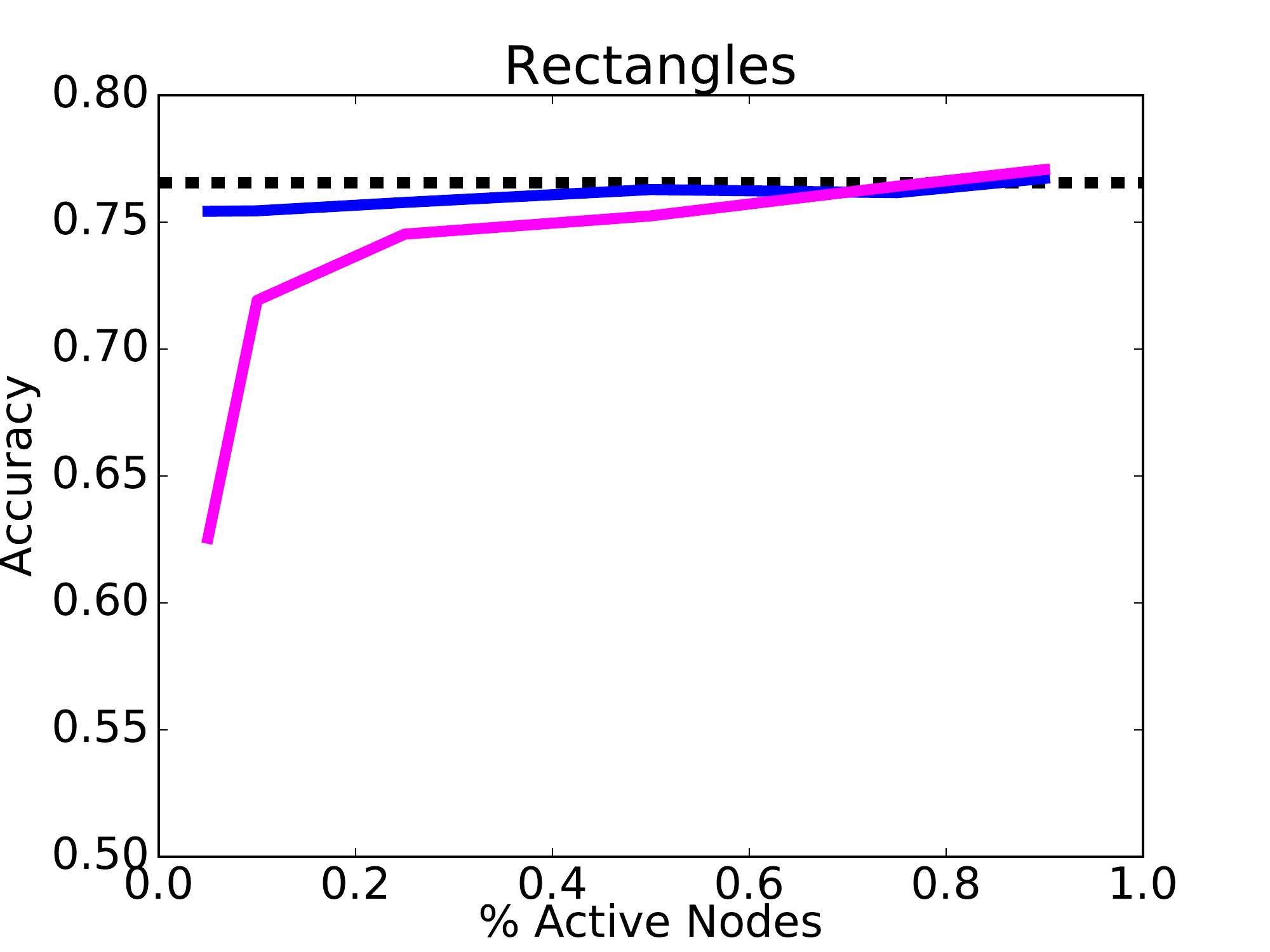} \hspace{-0.19in}}
\end{center}
\caption{Classification accuracy under different levels of active nodes with networks on the MNIST (1st), NORB (2nd), Convex (3rd) and Rectangles (4th) datasets. The standard neural network (dashed black line) is our baseline accuracy. We can clearly see that adaptive sampling with hashing (LSH) is significantly more effective than random sampling (VD). {\bf 1. Top Panels:} 2 hidden Layers. {\bf 1. Bottom Panels:} 3 hidden Layers}
  \label{LSH_VD_STD}
\end{figure*}

\begin{figure*}[ht]
\begin{center}
\mbox{\hspace{-0.185in}
\includegraphics[width=1.9in]{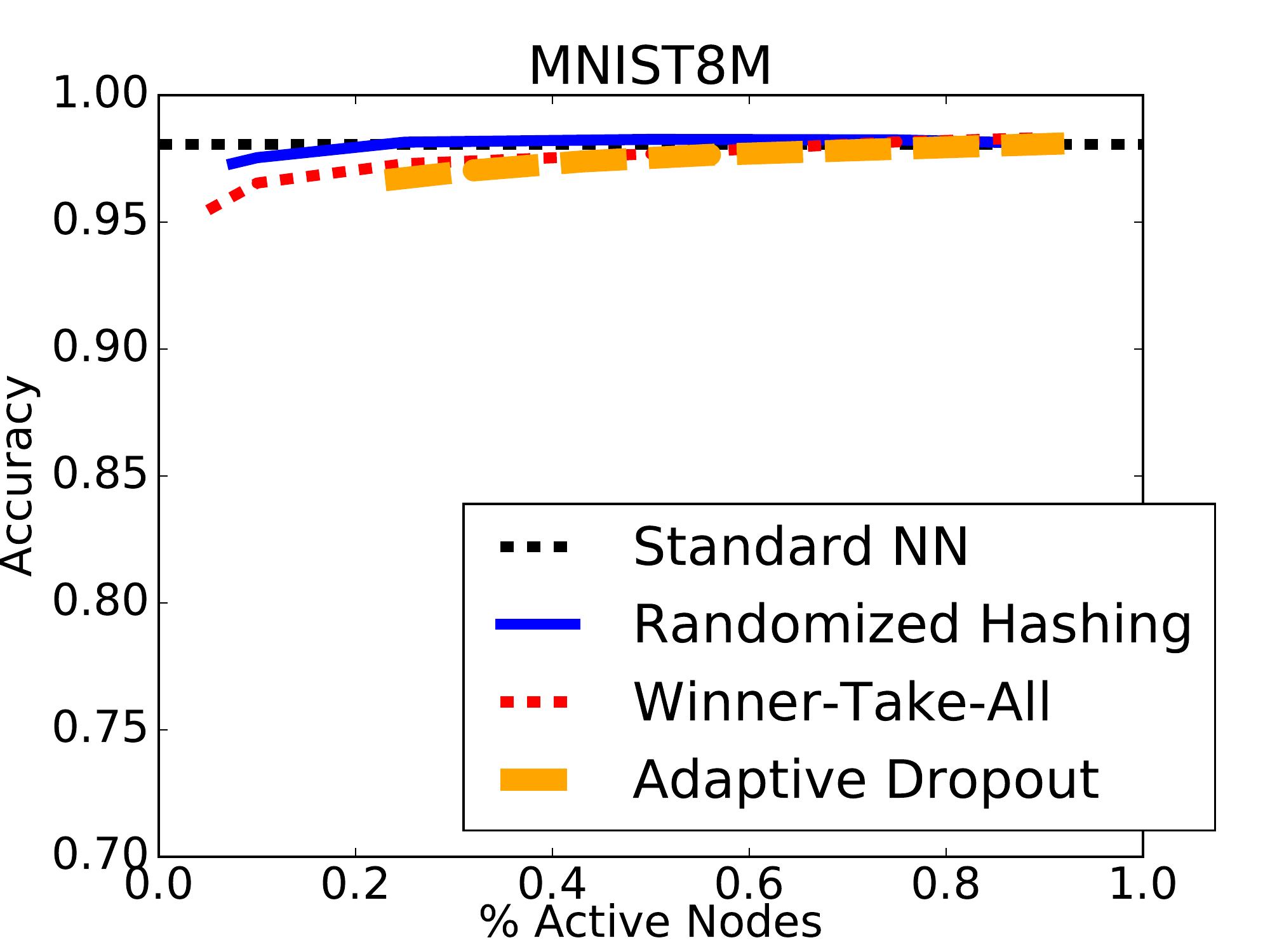} \hspace{-0.15in}
\includegraphics[width=1.9in]{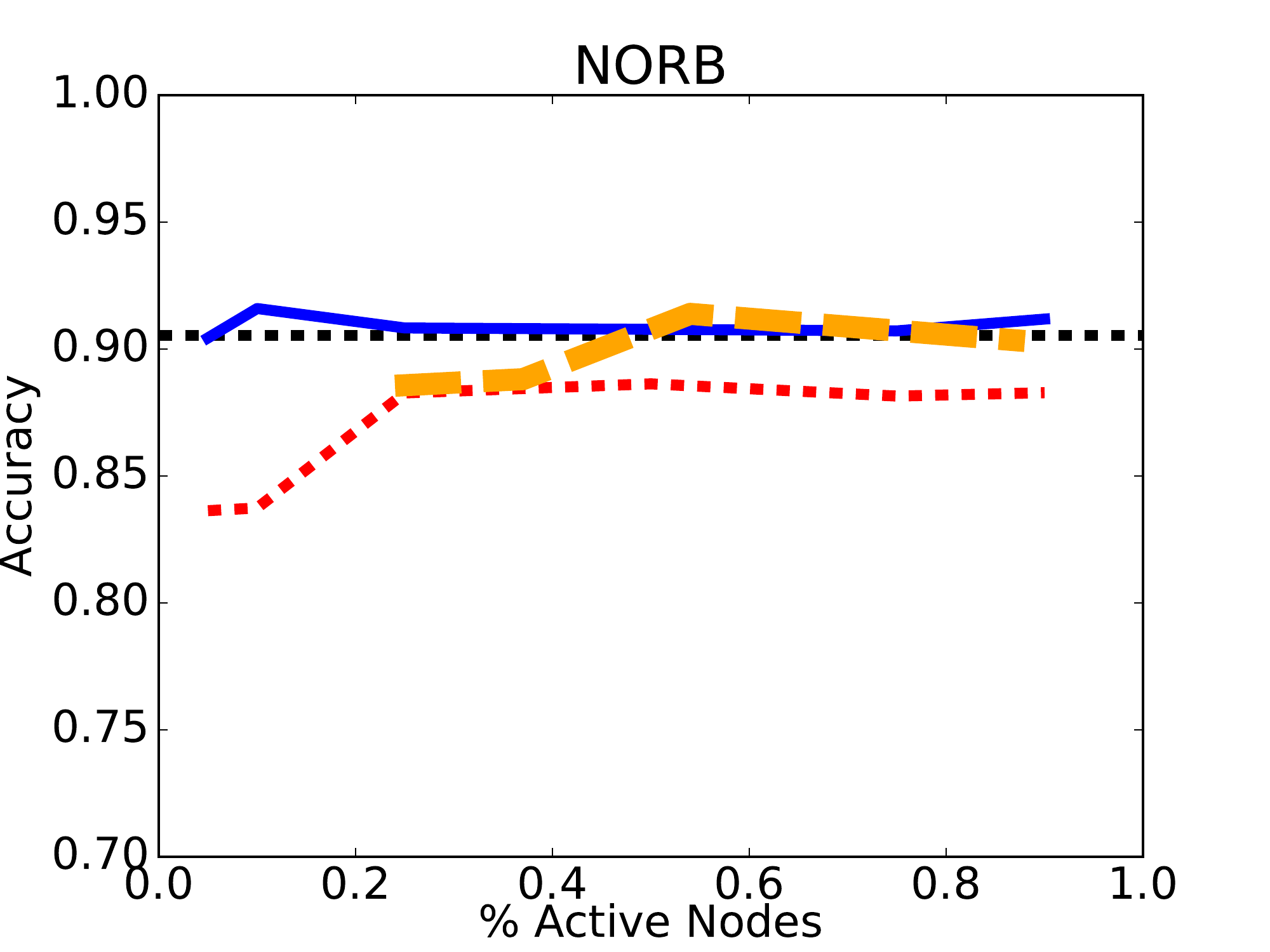} \hspace{-0.19in}
\includegraphics[width=1.9in]{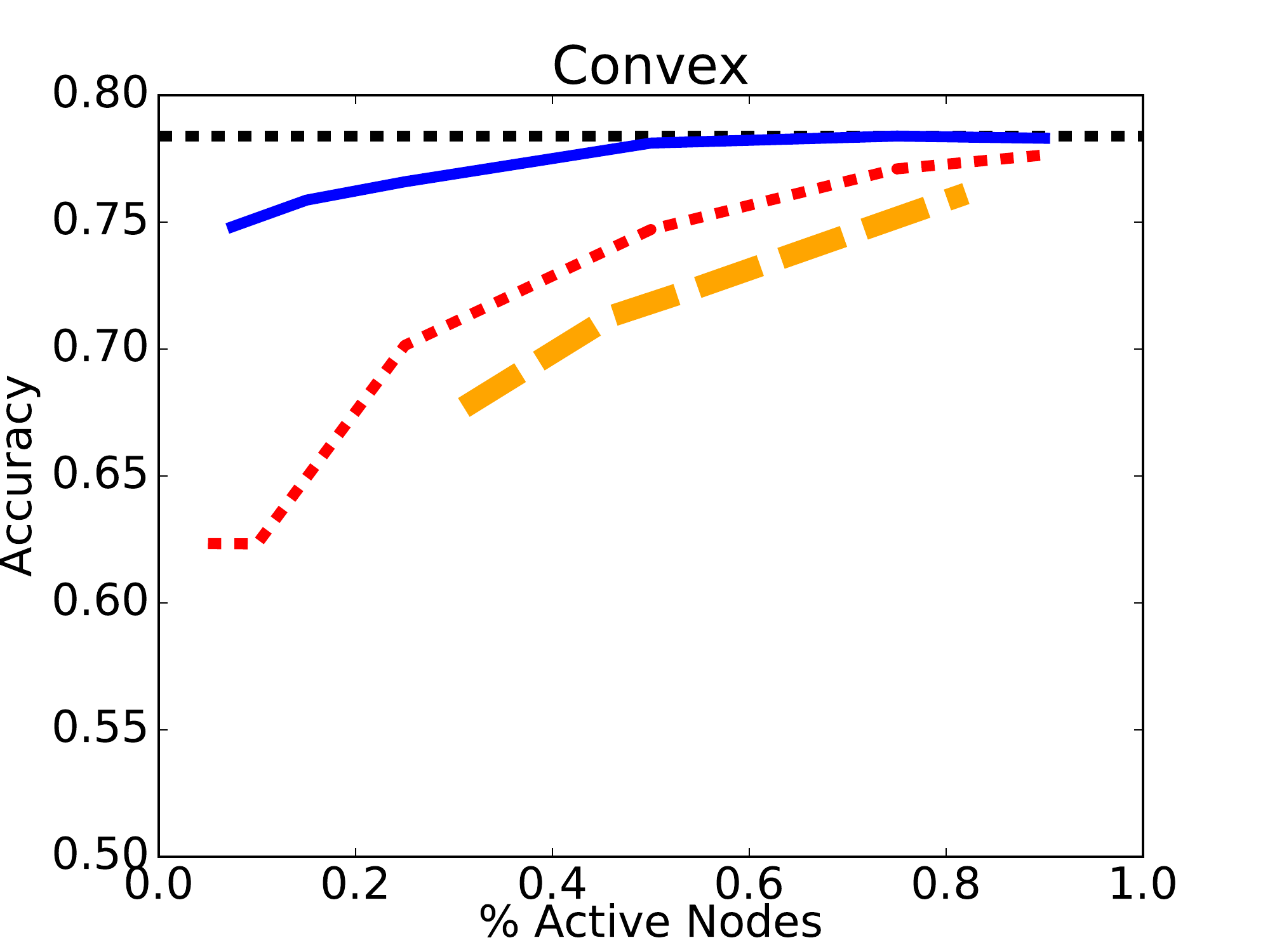} \hspace{-0.19in}
\includegraphics[width=1.9in]{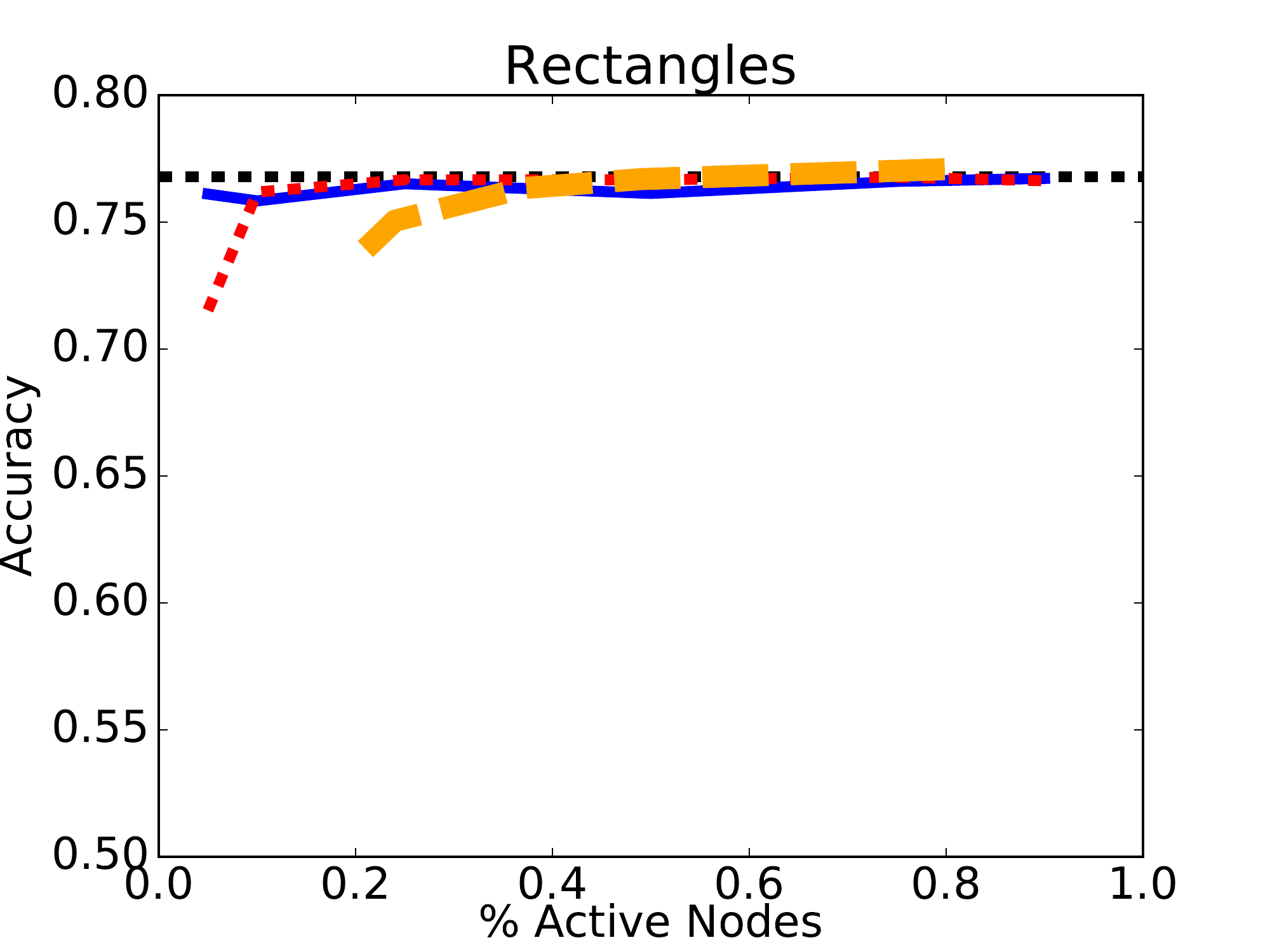} \hspace{-0.19in}}
\mbox{\hspace{-0.185in}
\includegraphics[width=1.9in]{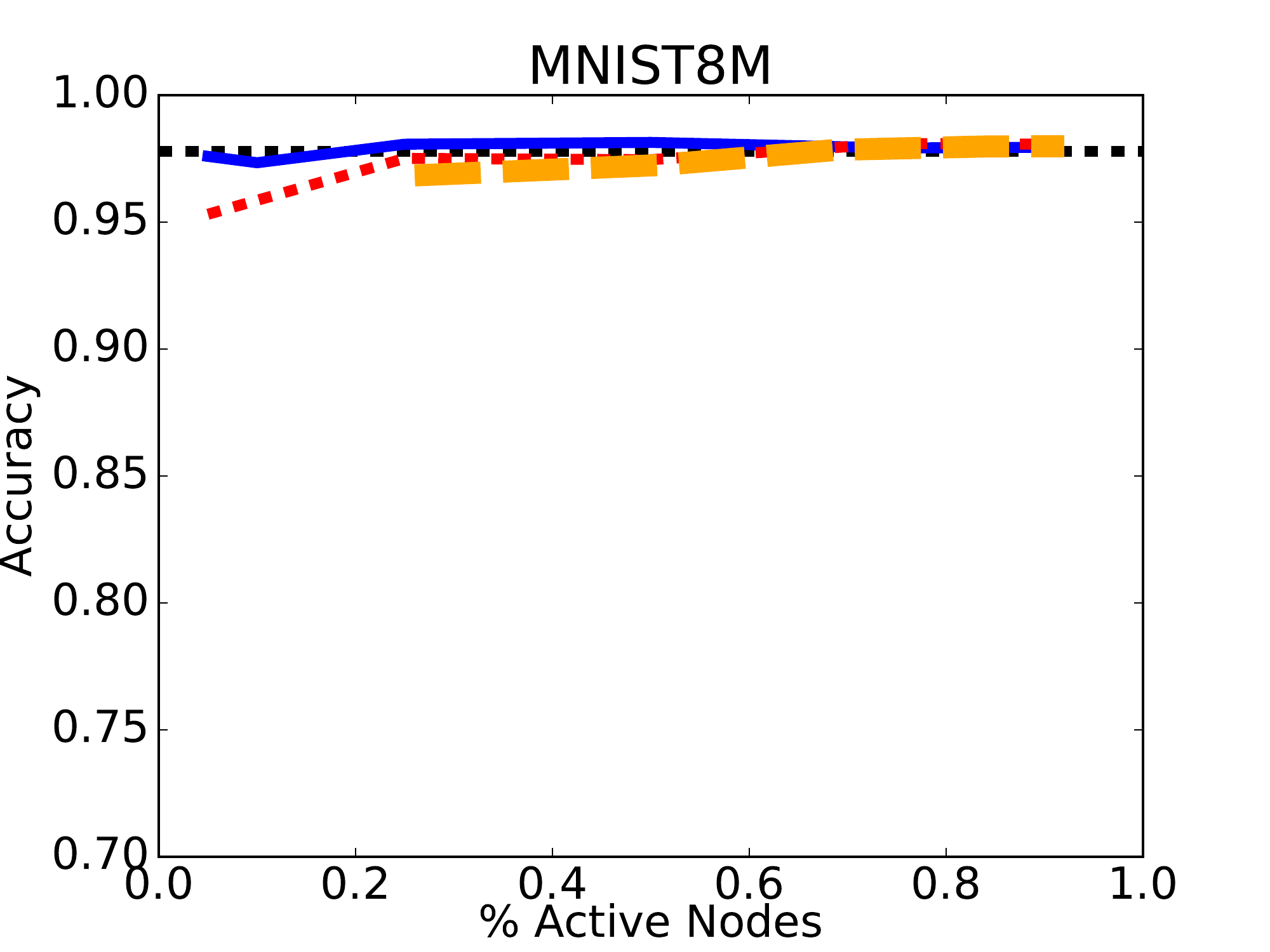} \hspace{-0.15in}
\includegraphics[width=1.9in]{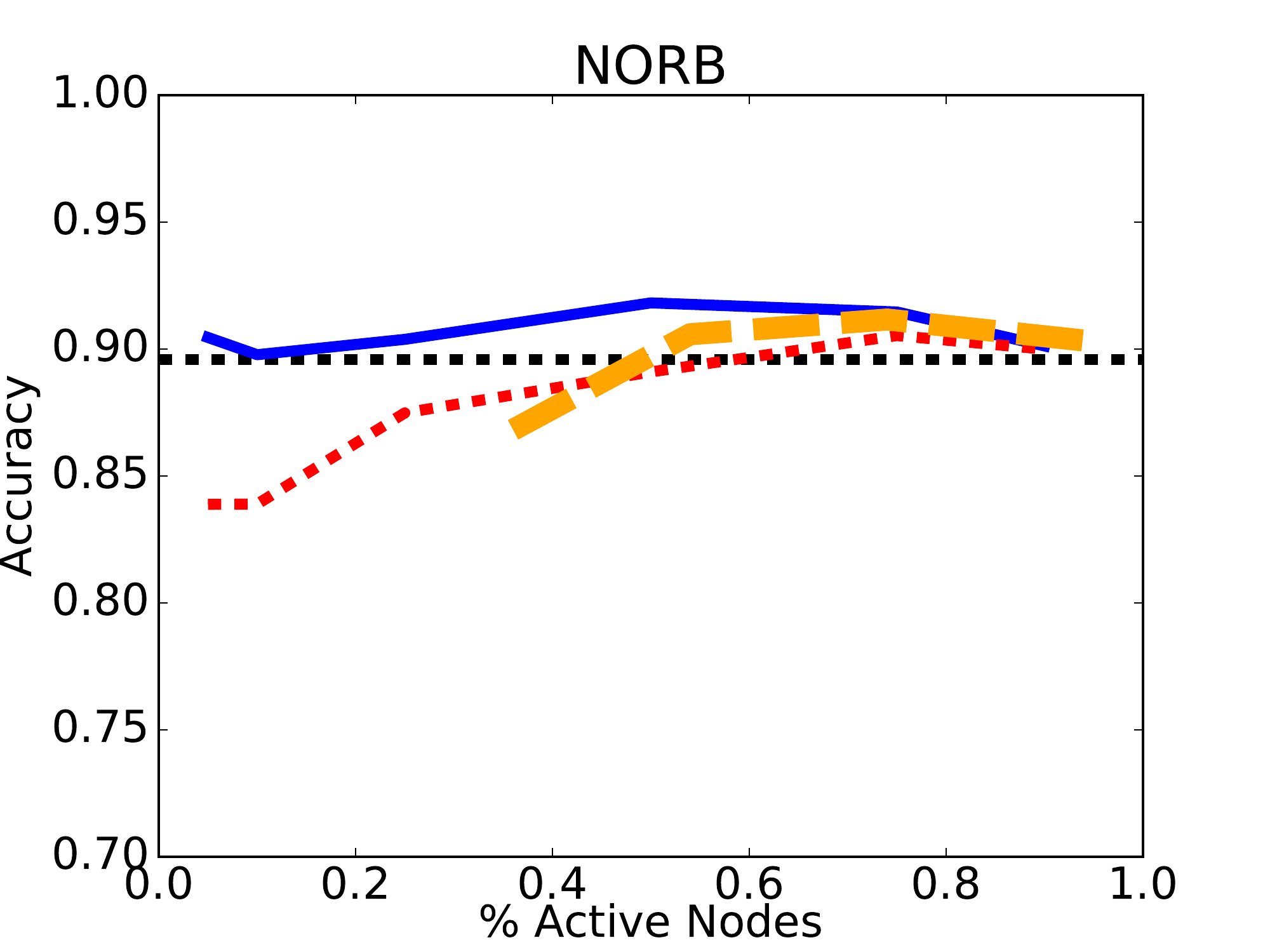} \hspace{-0.19in}
\includegraphics[width=1.9in]{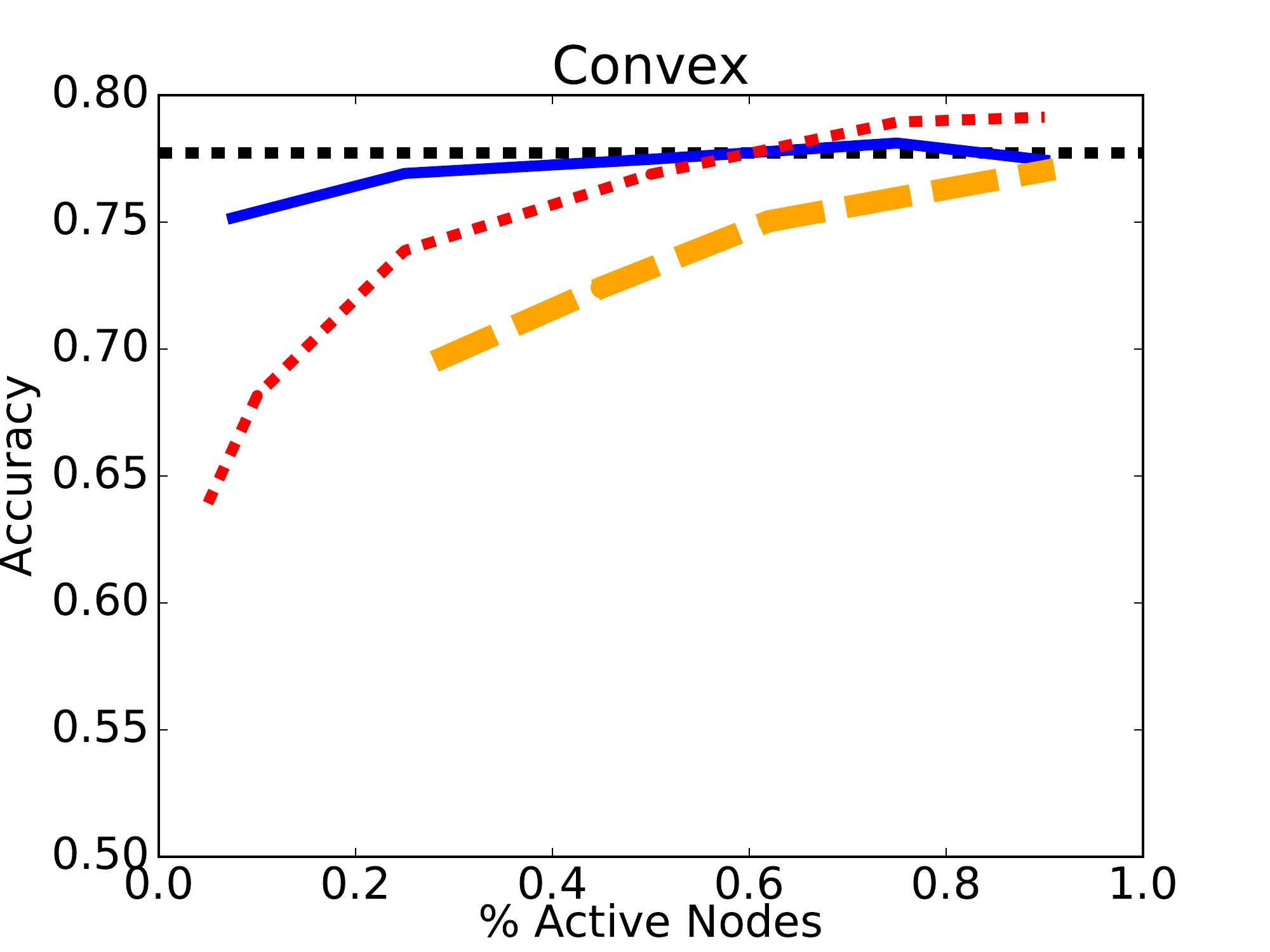} \hspace{-0.19in}
\includegraphics[width=1.9in]{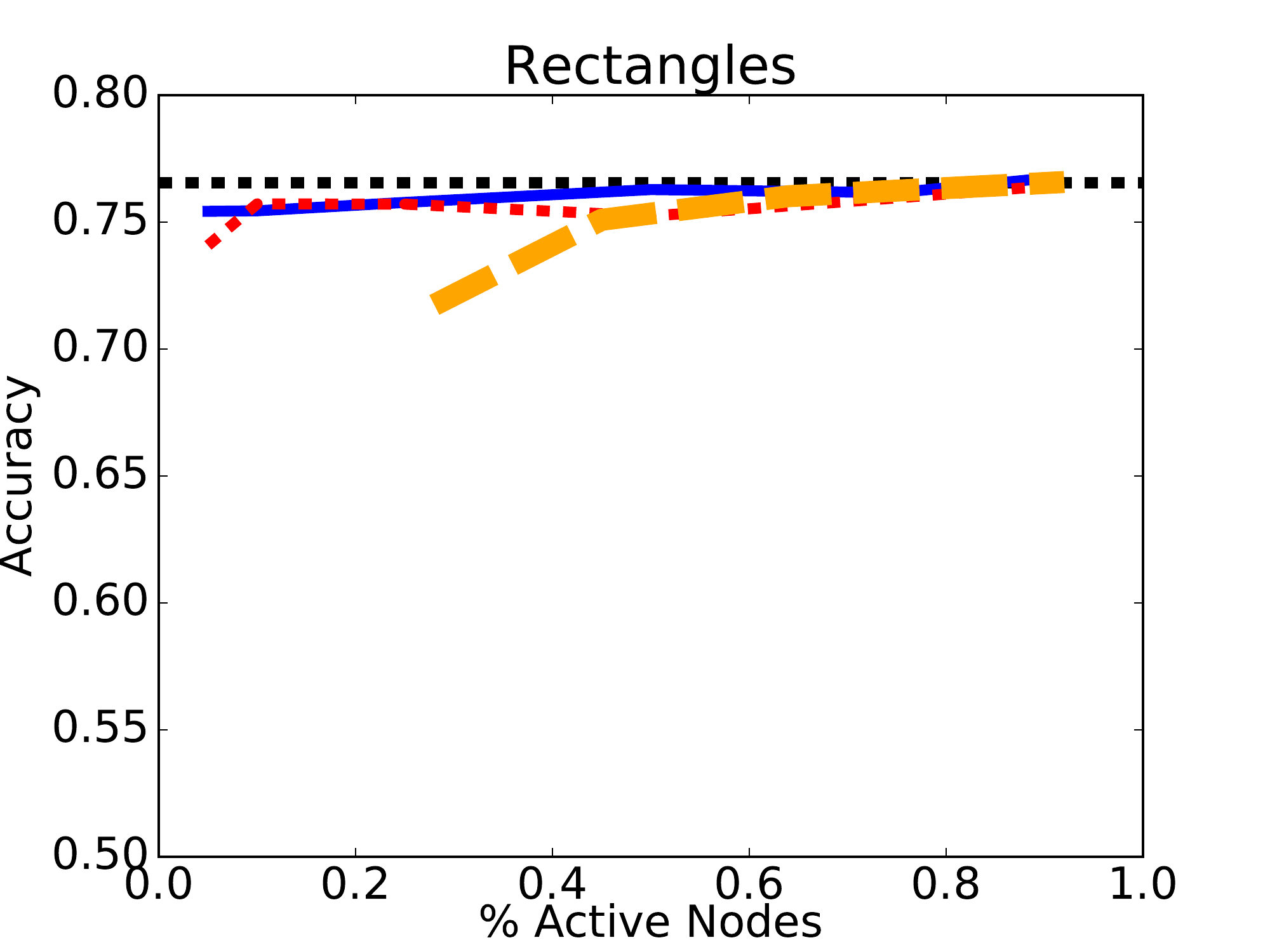} \hspace{-0.19in}}
\end{center}
\caption{Classification accuracy under different levels of active nodes with networks on the MNIST (1st), NORB (2nd), Convex (3rd) and Rectangles (4th) datasets. The standard neural network (dashed black line) is our baseline accuracy. WTA and AD (dashed red + yellow lines) perform the same amount of computation as the standard neural network. Those techniques select nodes with high activations, after full computations, to achieve better accuracy. We compare our LSH approach to determine whether our randomized algorithm achieves comparable performance while reducing the total amount of computation. We do not have data for adaptive dropout at the 5\% and 10\% computation levels because those models diverged when the number of active nodes dropped below 25\%. {\bf 1. Top Panels:} 2 hidden Layers. {\bf 1. Bottom Panels:} 3 hidden Layers }
  \label{LSH_AD_WTA_STD}
\end{figure*}

\subsection{Sustainability}
\label{sec:dropout}
\subsubsection{Experimental Setting}
All of the experiments for our approach and the other techniques were run on a 6-core Intel i7-3930K machine with 16 GB of memory. Our approach uses stochastic gradient descent with Momentum and Adagrad \cite{dean2012large}. Since our approach uniquely selects an active set of nodes for each hidden layer, we focused on a CPU-based approach to simplify combining randomized hashing with neural networks. The ReLU activation function was used for all methods. The learning rate for each approach was set using a standard grid search and ranged between $10^{-2}$ and $10^{-4}$. The parameters for the randomized hash tables were K = 6 bits and L = 5 tables with multi-probe LSH~\cite{lv2007multi} creating a series of probes in each hash tables. We stop early if we find that we have samples enough nodes even before exhausting all buckets. Since we evaluate all levels of selection, in order to increase the percentage of nodes retrieved we increase the number of probes in the buckets. For the experiments, we use a fixed threshold to cap the number of active nodes selected from the hash tables to guarantee the amount of computation is within a certain level. 

\subsubsection{Effect of computation levels}
Figures \ref{LSH_VD_STD}, \ref{LSH_AD_WTA_STD} show the accuracy of each method on neural networks with 2 (Top panel) and 3 (Bottom Panel) hidden layers with the percentage of active nodes ranging from [0.05, 0.10, 0.25, 0.5, 0.75, 0.9]. The standard neural network is our baseline in these experiments and is marked with a dashed black line. Each hidden layer contains 1000 nodes. The x-axis represents the average percentage of active nodes per epoch selected by each technique. Our approach only performs the forward and back propagation steps on the nodes selected in each hidden layer. The other baseline techniques except for Dropout (VD) perform the forward propagation step for each node first to compute all the activations, before setting node activations to zero based on the corresponding algorithm. Thus on VD and our proposal requires a lesser number of multiplications compared to a standard neural network training procedure. 

Figures~\ref{LSH_VD_STD} and \ref{LSH_AD_WTA_STD} summarizes the accuracy of different approaches at various computations levels. From the figures, we conclude the following. 
\begin{itemize}
    \item The plots are consistent across all data-sets and architectures with different depths.
    \item Our method (LSH) gives the best overall accuracy with the fewest number of active nodes. The fact that our approximate method is even slightly better than WTA and adaptive dropouts is not surprising, as it is long known that a small amount of random noise leads to better generalization. For examples, see~\cite{srivastava2014dropout}.
    \item As the number of active nodes decreases from 90\% to 5\%, LSH experiences the smallest drop in performance and less performance volatility.
    \item VD experiences the greatest drop in performance when reducing the number of active nodes from 50\% to 5\%.
    \item As the number of hidden layers increases in the network, the performance drop for VD becomes steeper.
    \item WTA performed better than VD when the percentage of active nodes is less than 50\%
    \item The number of multiplications is a constant multiple of the percentage of active nodes in a hidden layer. The parameters, alpha, and beta determine how many nodes are kept active. We used alpha $\alpha$ = 1.0 and beta $\beta$ = [-1.5, -1.0, 0, 1.0, 3.5] for our experiments. Our model diverged when the number of active nodes dropped below 25\%, so we do not have data for Adaptive Dropout at the 5\% and 10\%.
    \item The performance for each method stabilizes, as the computation level approaches 100\%.
\end{itemize}

Lowering the computational cost of running neural networks by running fewer operations reduces the energy consumption and heat produced by the processor. However, large neural networks provide better accuracy and arbitrarily reducing the amount of computation hurts performance. Our experiments show that our method (LSH) performs well at low computation levels, providing the best of both worlds - high performance and low processor computation. This approach is ideal for mobile phones, which have a thermal power design (TDP) of 3-4 Watts, because reducing the processor's load directly translates into longer battery life.

\begin{figure*}[ht]
\begin{center}
\mbox{\hspace{-0.185in}
\includegraphics[width=1.9in]{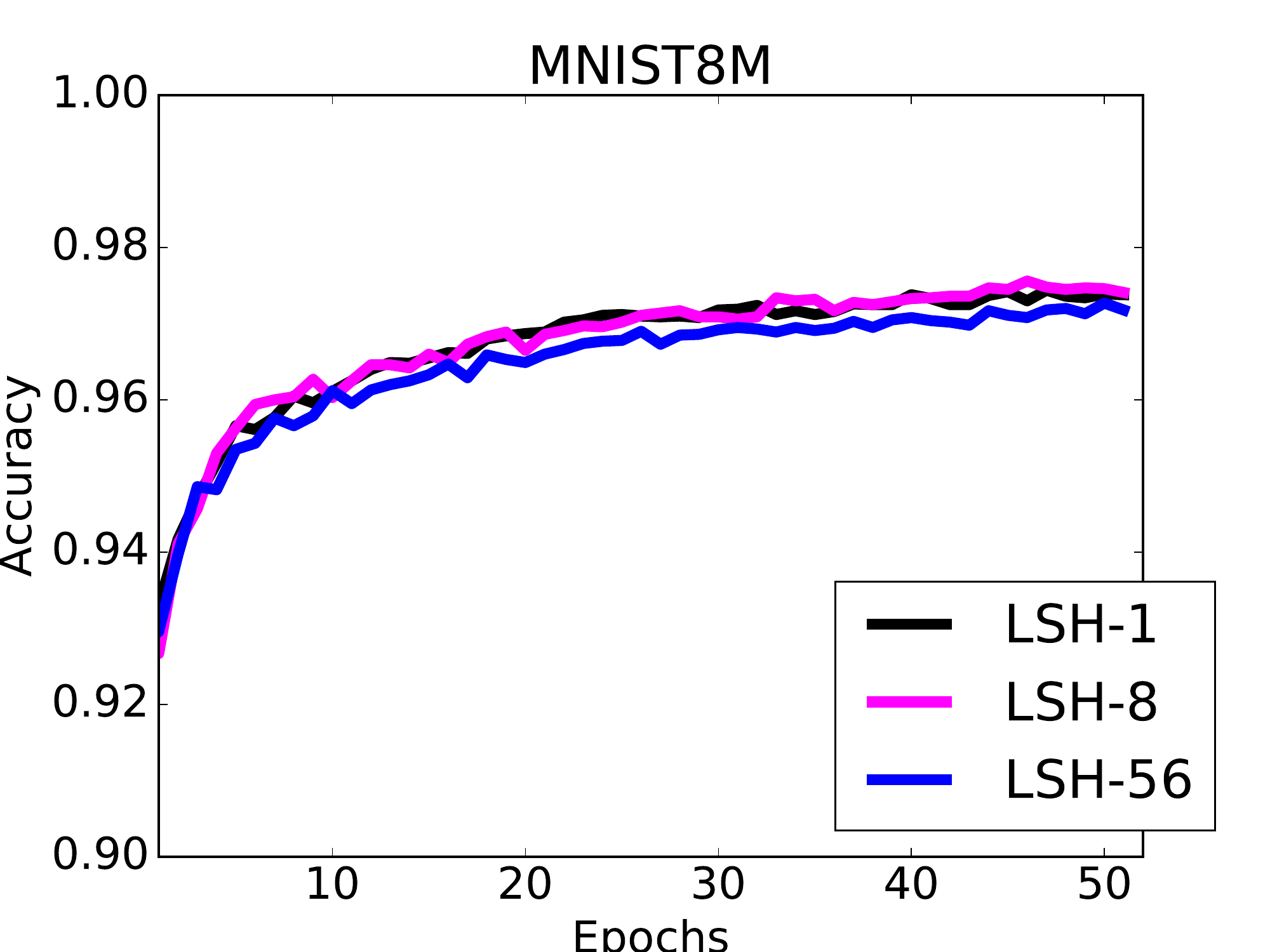} \hspace{-0.15in}
\includegraphics[width=1.9in]{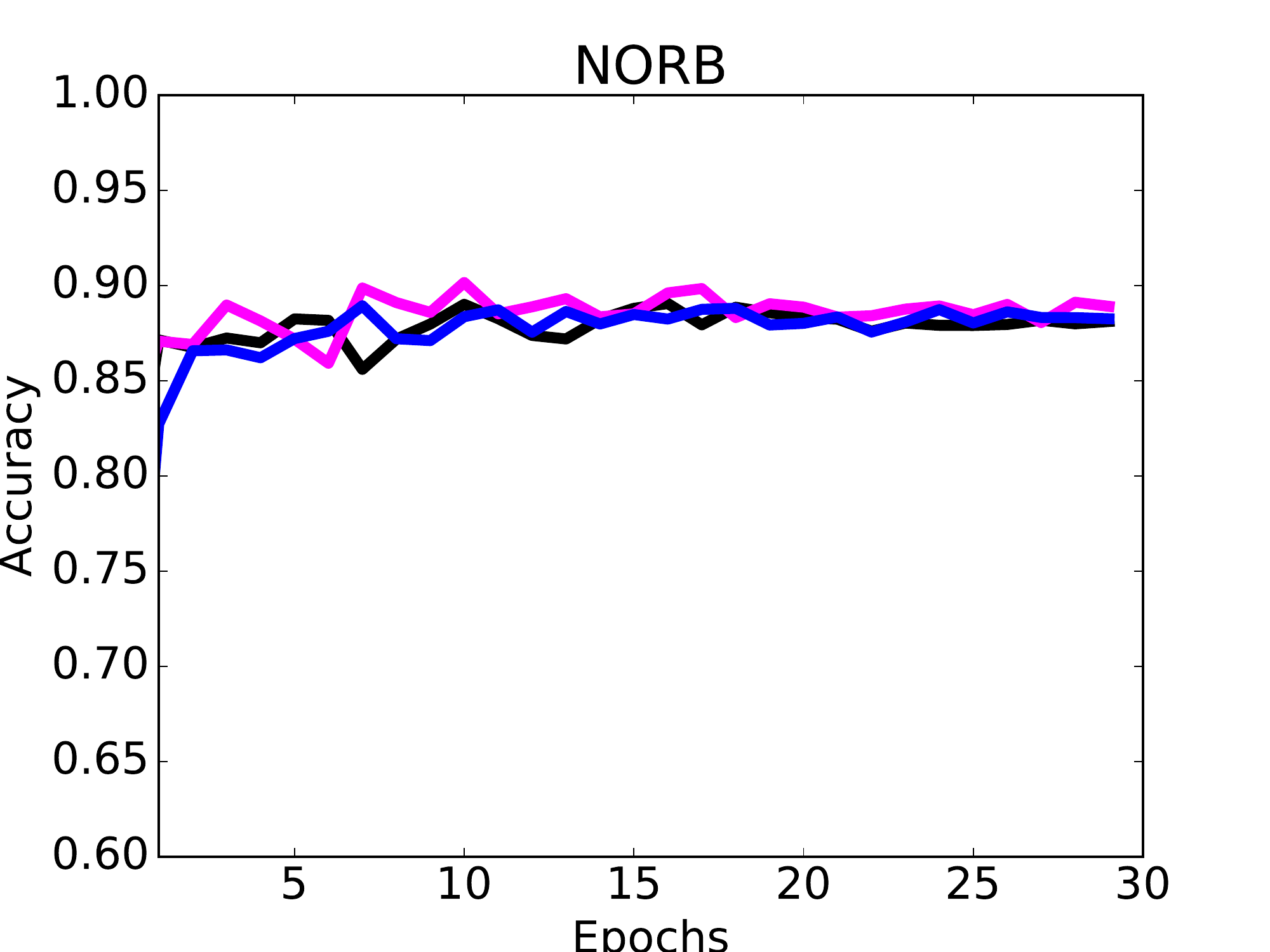} \hspace{-0.19in}
\includegraphics[width=1.9in]{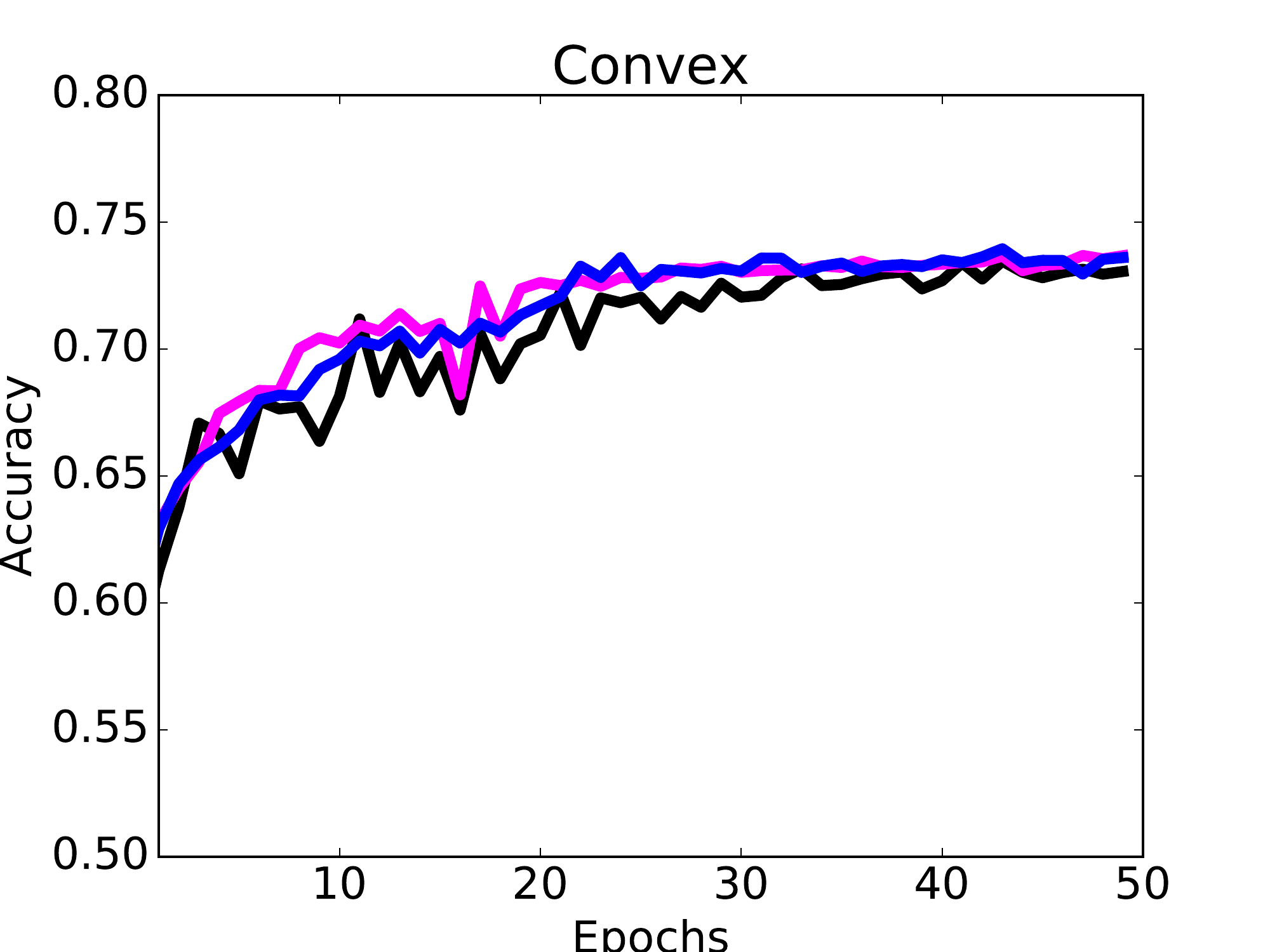} \hspace{-0.19in}
\includegraphics[width=1.9in]{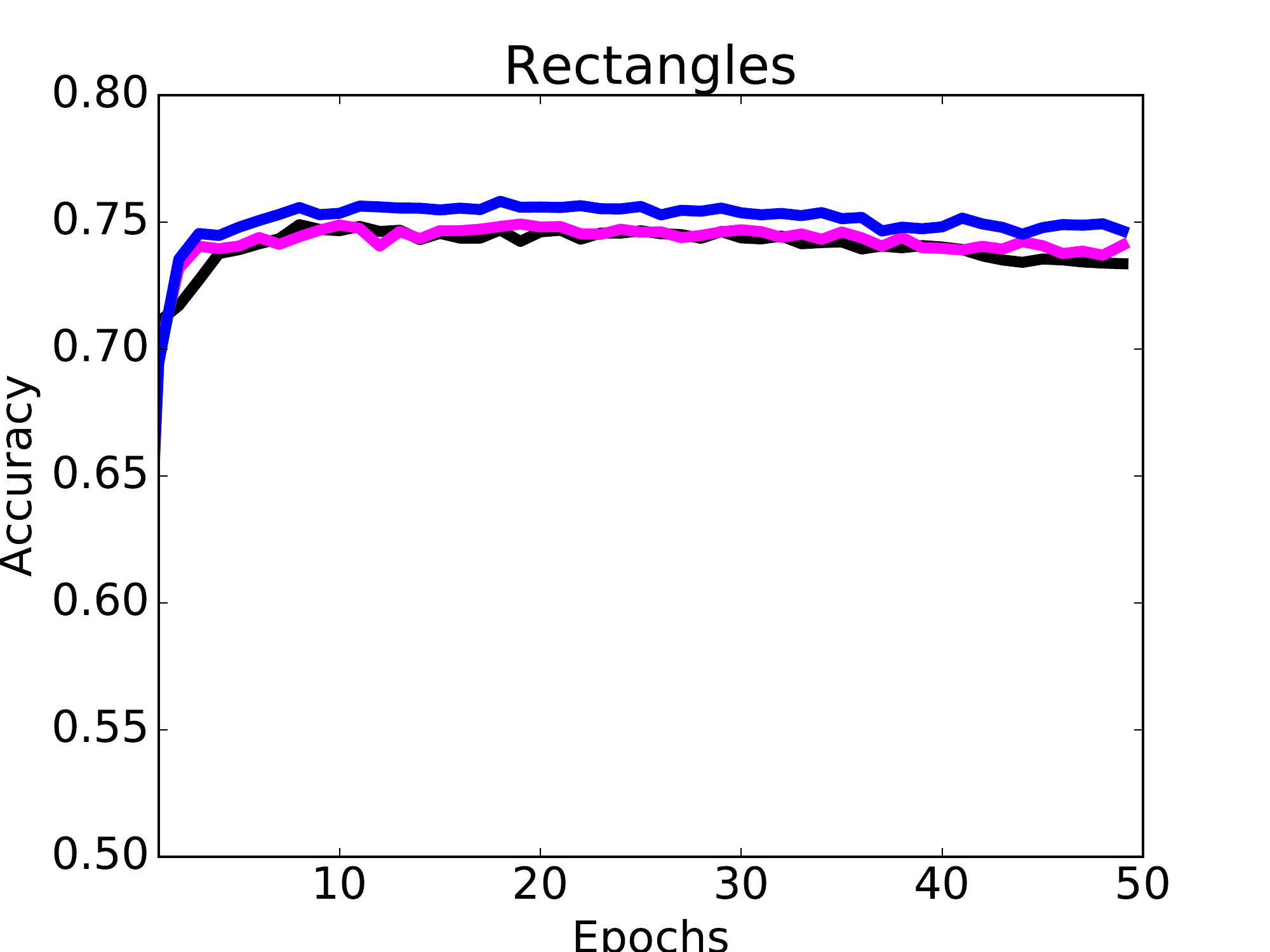} \hspace{-0.19in}
}
\end{center}
\caption{The convergence of our randomized hashing approach (LSH-5\%) over several training epochs using asynchronous stochastic gradient with 1, 8, 56 threads. We used a (3 hidden layer) network on the MNIST (1st), NORB (2nd), Convex (3rd) and Rectangles (4th) datasets. Only 5\% of the standard network's computation was performed in this experiment.}
  \label{asgd_accuracy}
\end{figure*}

\begin{figure*}[ht]
\begin{center}
\mbox{\hspace{-0.185in}
\includegraphics[width=1.9in]{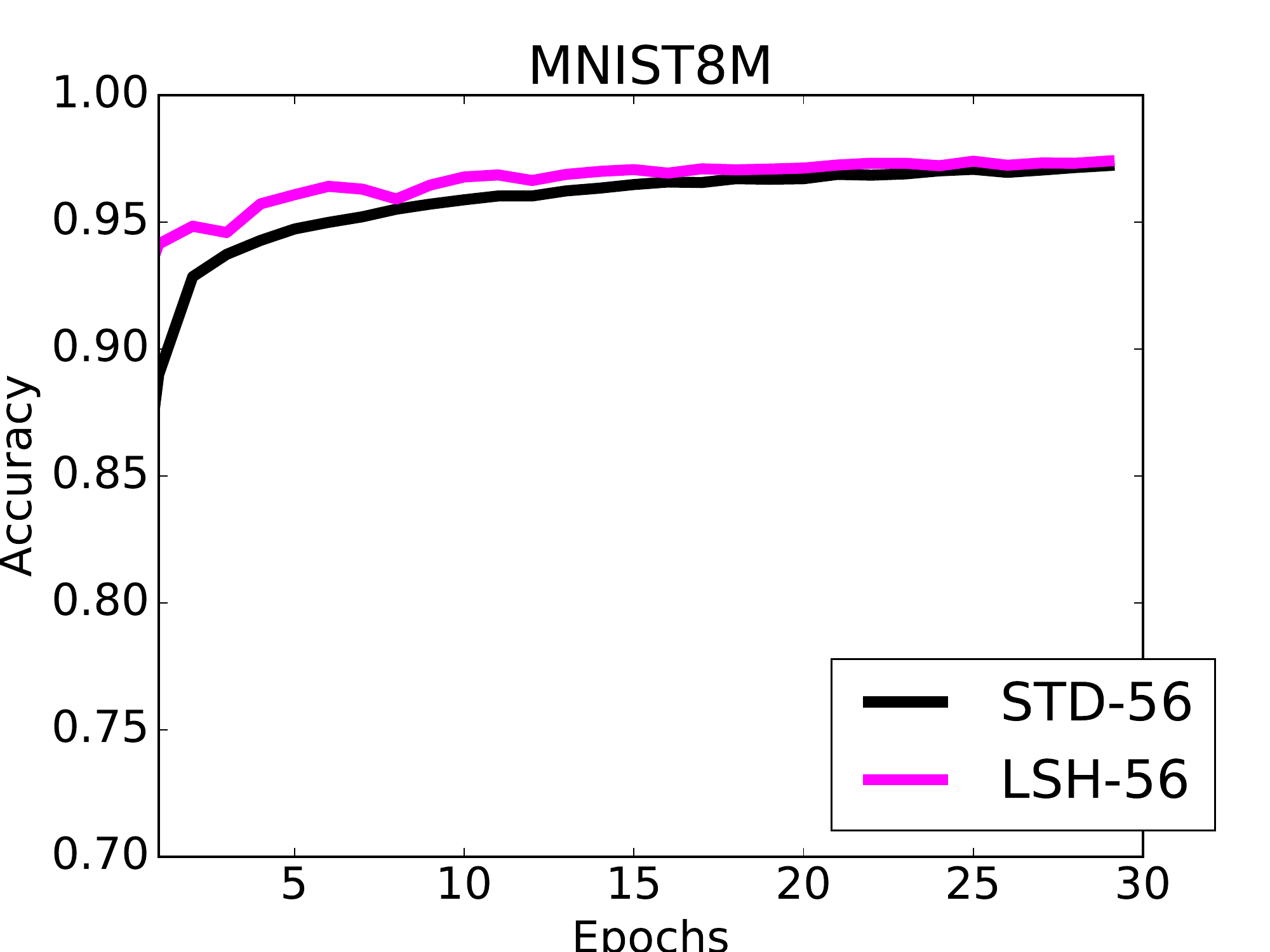} \hspace{-0.15in}
\includegraphics[width=1.9in]{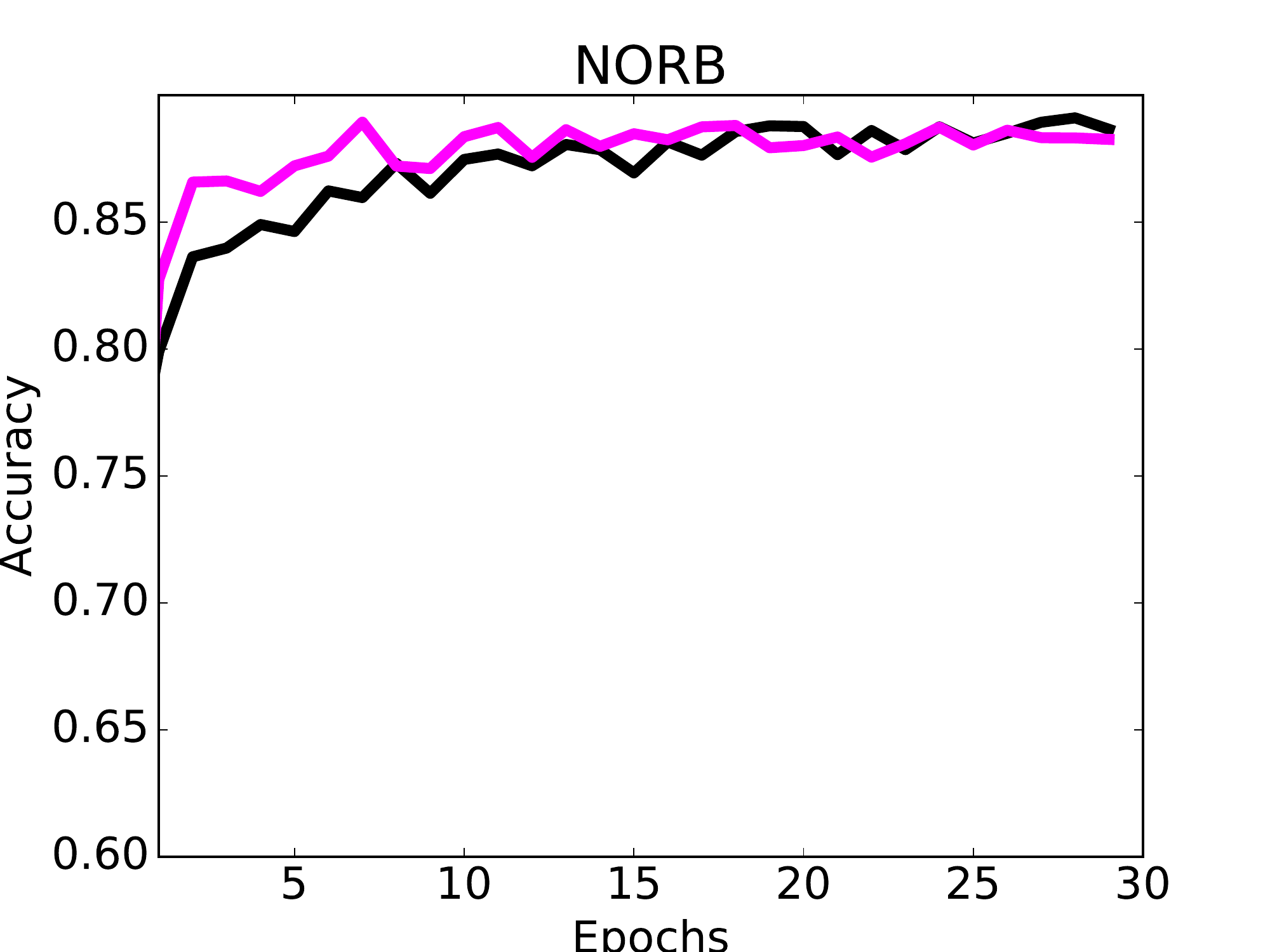}  \hspace{-0.19in}
\includegraphics[width=1.9in]{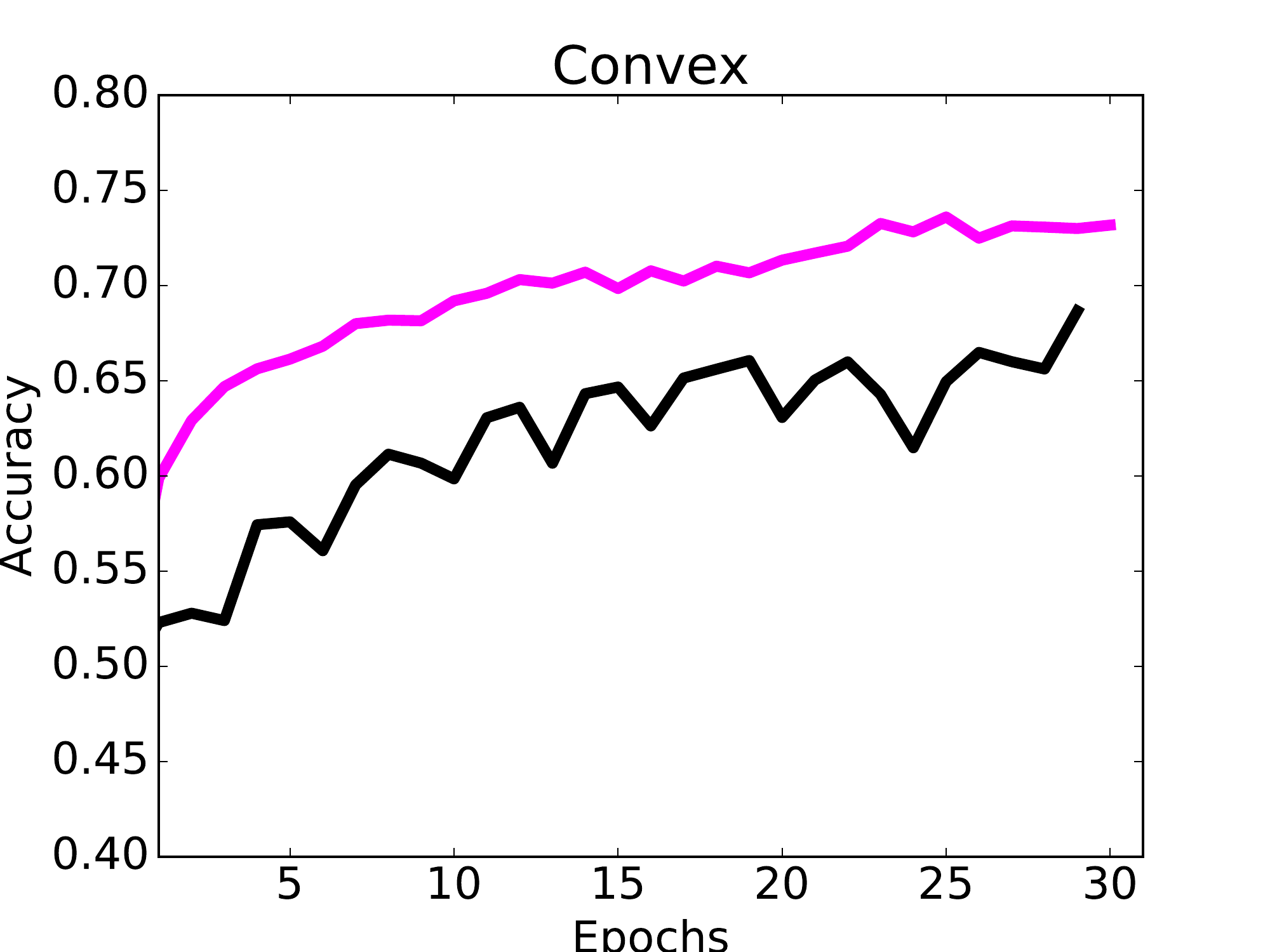} \hspace{-0.19in}
\includegraphics[width=1.9in]{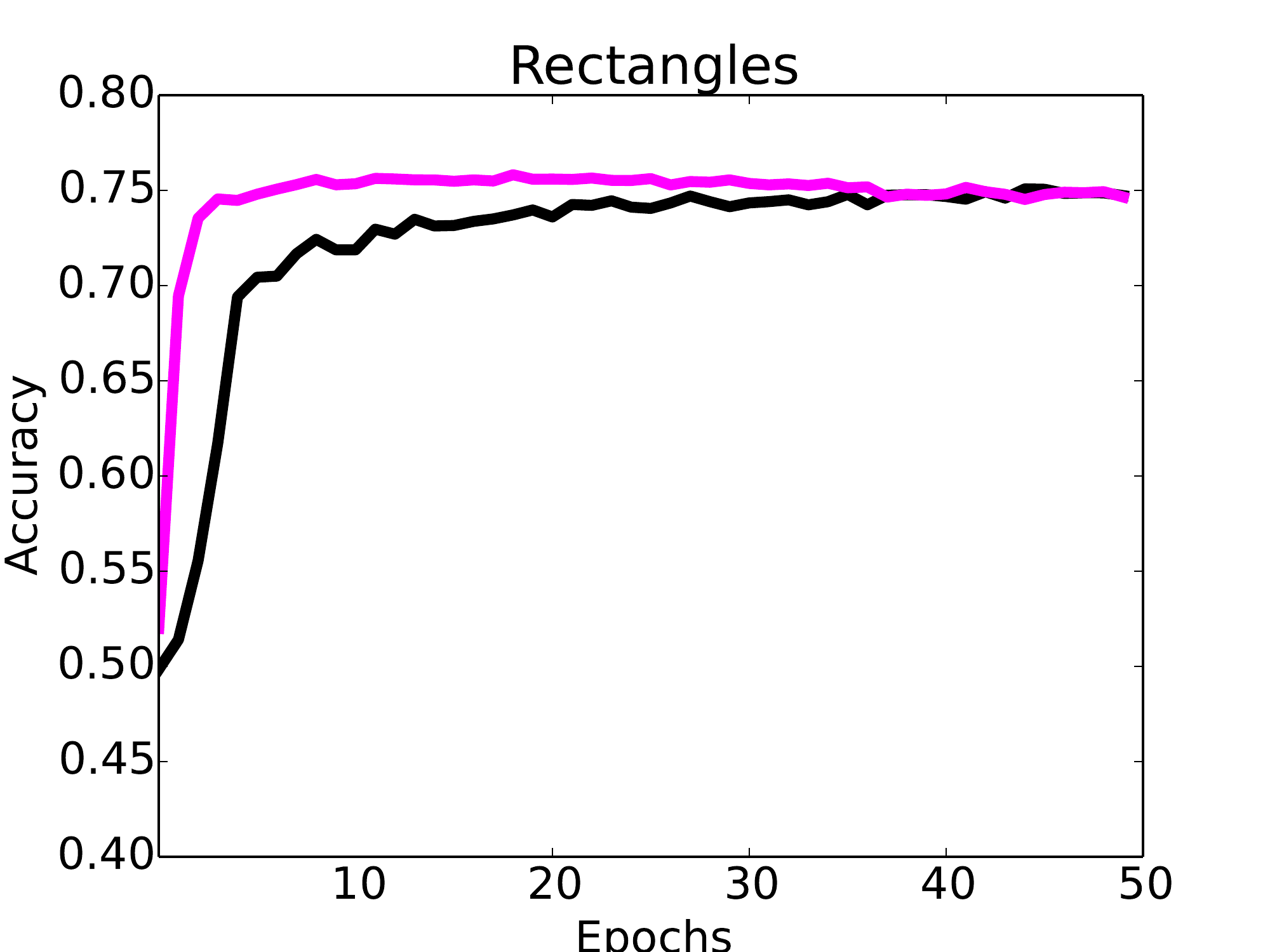} \hspace{-0.19in}
}
\end{center}
\caption{Performance comparison between our randomized hashing approach and a standard network using asynchronous stochastic gradient descent on an Intel Xeon ES-2697 machine with 56-cores. We used (3 hidden layer) networks on MNIST (1st), NORB (2nd), Convex (3rd) and Rectangles (4th). All networks were initialized with the same settings for this experiment.}
  \label{asgd_lsh_std}
\end{figure*}

\begin{figure*}[ht]
\begin{center}
\mbox{\hspace{-0.185in}
\includegraphics[width=1.9in]{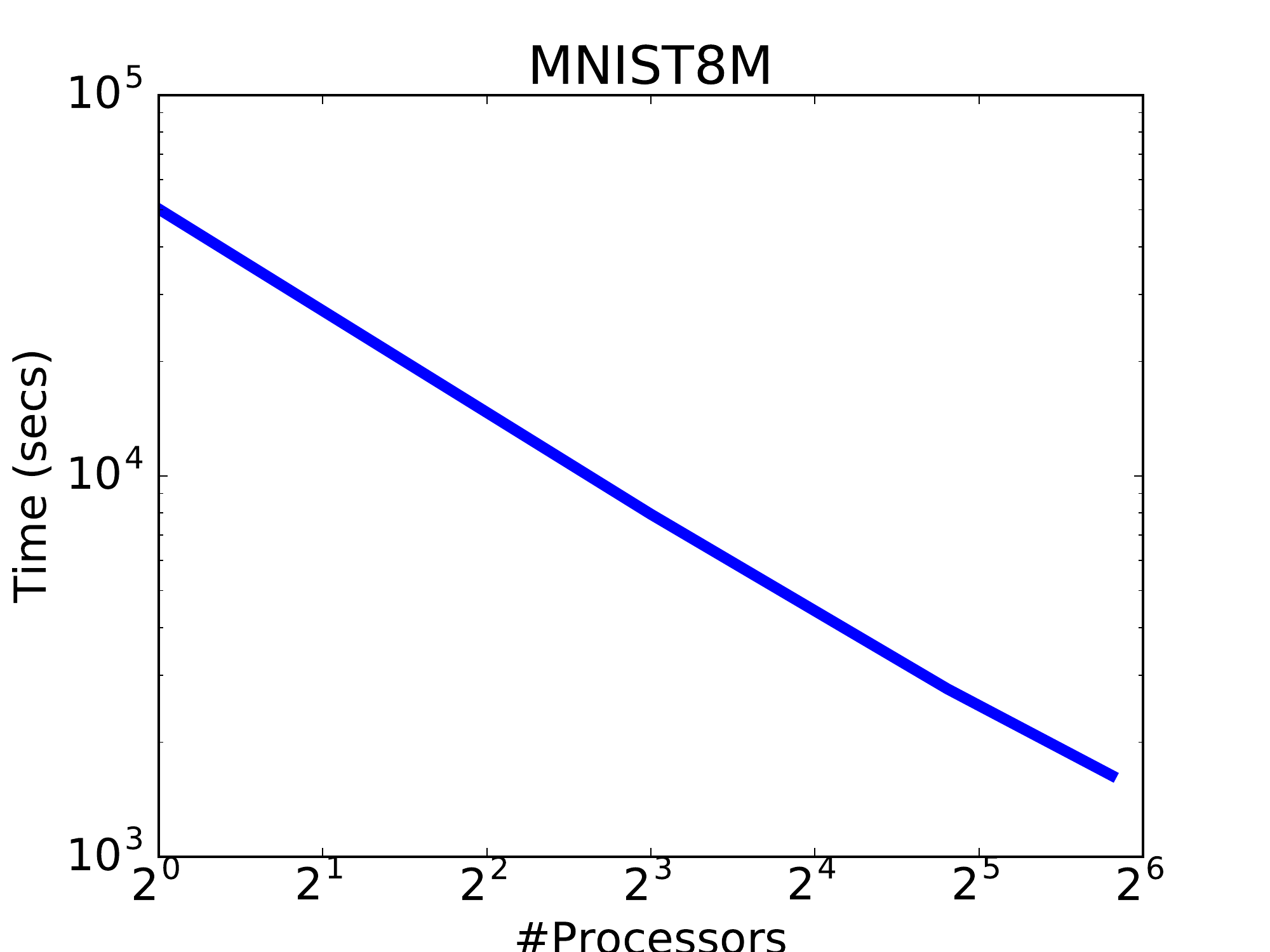} \hspace{-0.15in}
\includegraphics[width=1.9in]{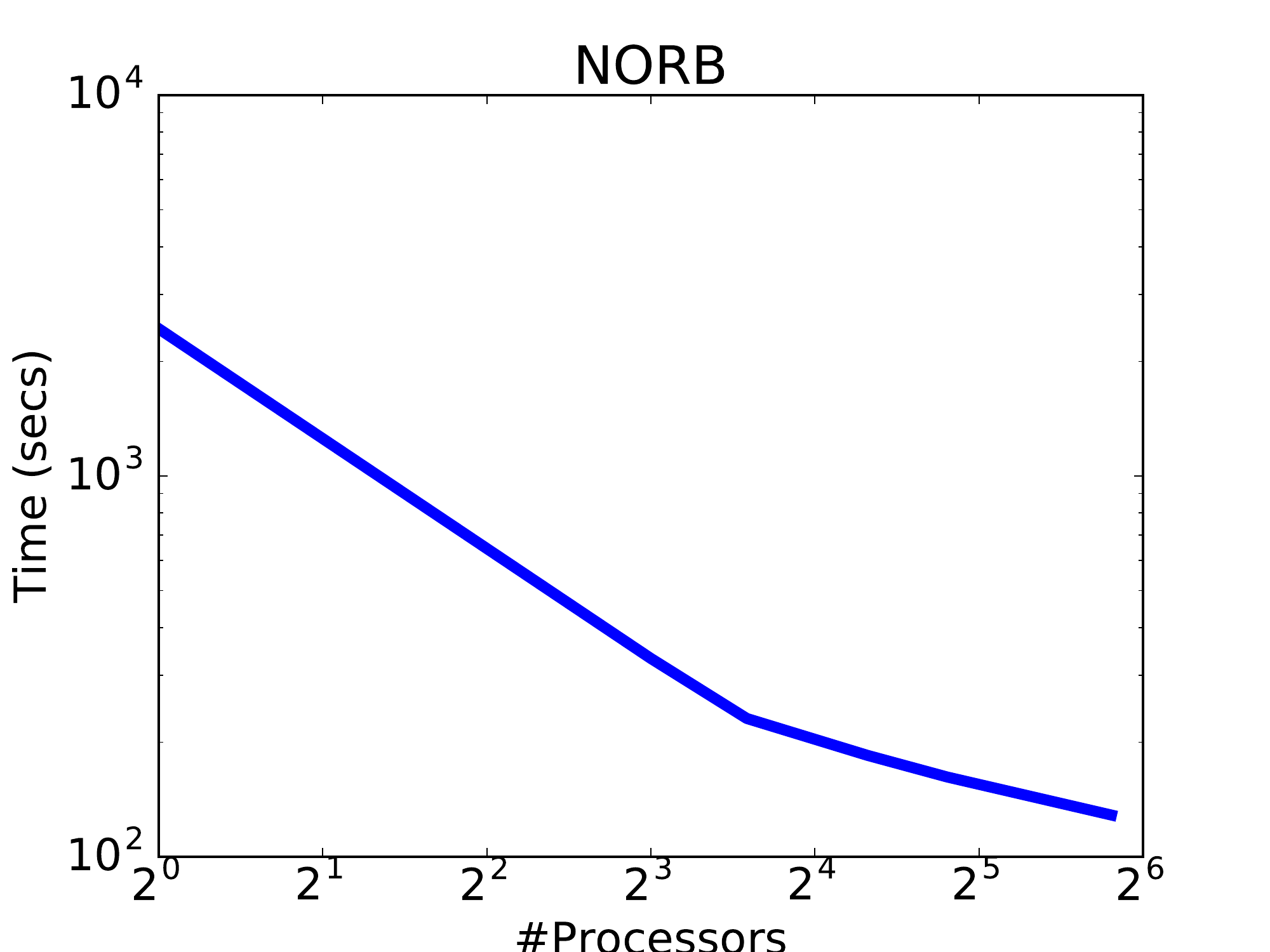} \hspace{-0.19in}
\includegraphics[width=1.9in]{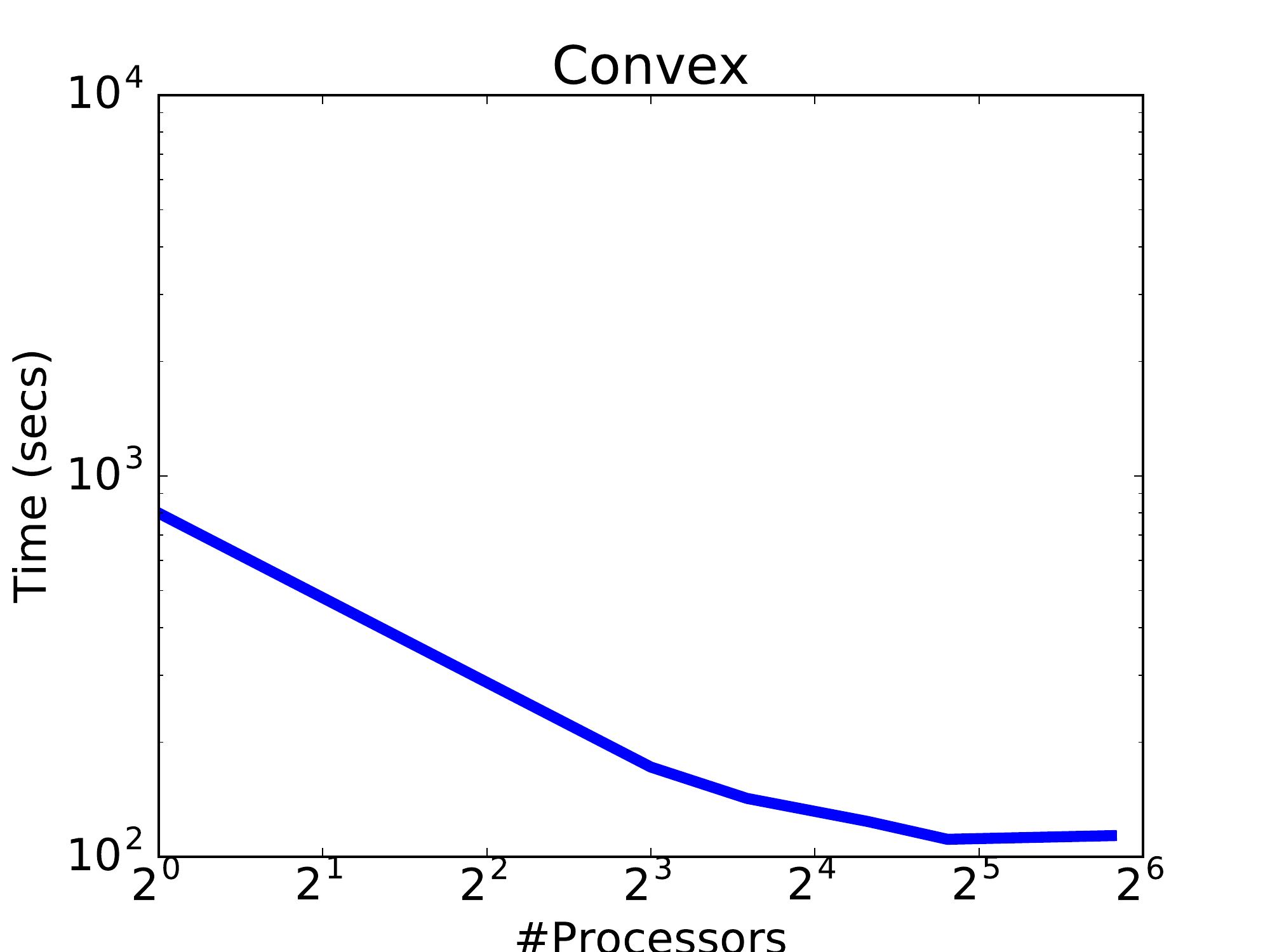} \hspace{-0.19in}
\includegraphics[width=1.9in]{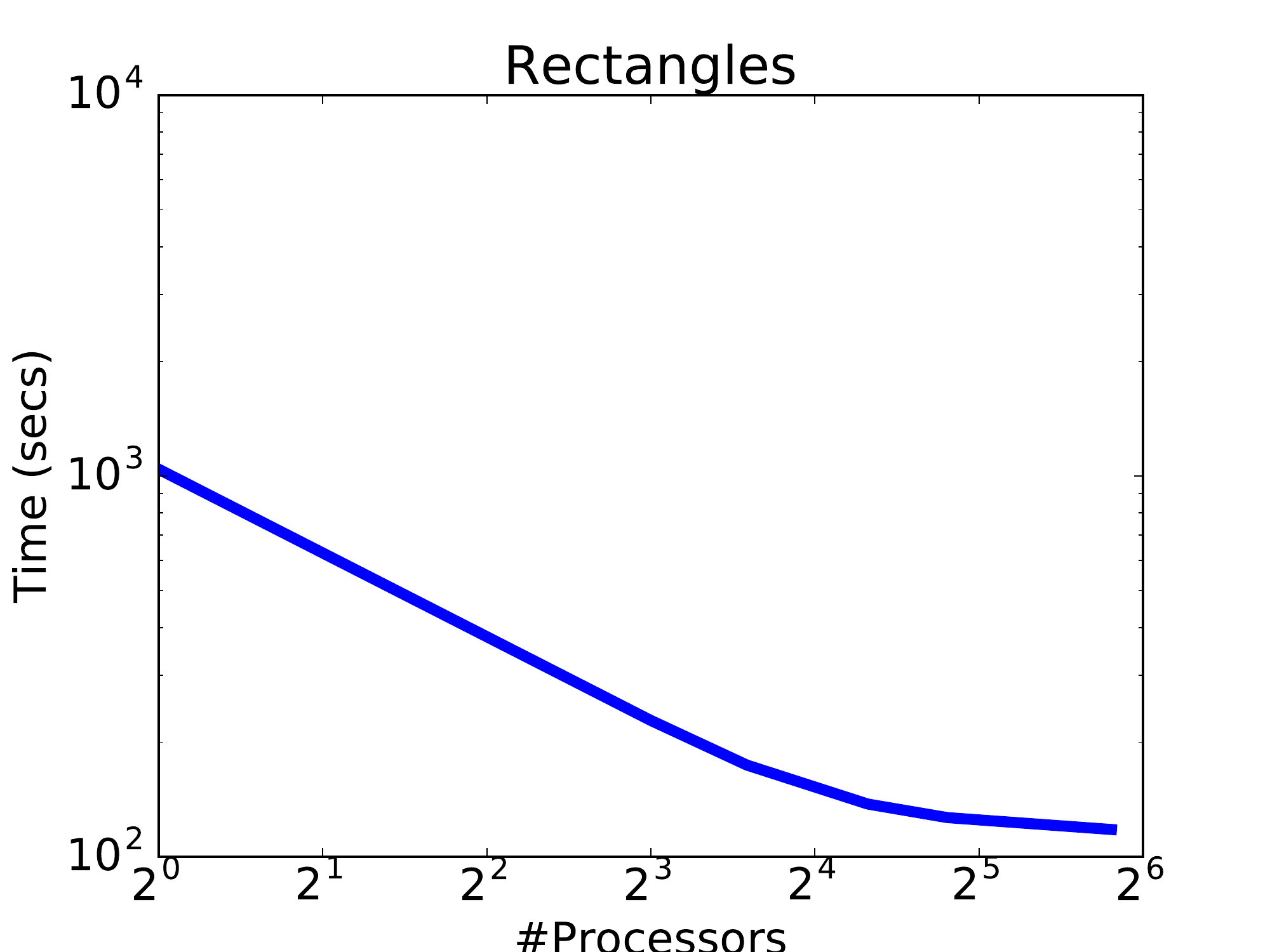} \hspace{-0.19in}
}
\end{center}
\caption{The wall-clock per epoch for our approach (LSH-5\%) gained by using asynchronous stochastic gradient descent. We used a (3 hidden layer) network on the MNIST (1st), NORB (2nd), Convex (3rd) and Rectangles (4th). We see smaller performance gains with the Convex and Rectangles datasets because there are not enough training examples to use of all of the processors effectively. Only 5\% of the standard network's computation was performed in this experiment.}
\label{asgd_performance}
\end{figure*}

\subsection{Scalability}
\label{sec:ASGD}
\subsubsection{Experimental Setting}
We now show experiments to demonstrate the scalability of our approach to large-scale, distributed computing environments. Specifically, we are testing if our approach maintains accuracy and improves training time, as we increase the number of processors. We use asynchronous stochastic gradient descent with momentum and adagrad \cite{dean2012large, recht2011hogwild}. Our implementation runs the same model, with similar initializations, on multiple training examples concurrently. The gradient is applied without synchronization or locks to maximize performance. We run all of the experiments on an Intel Xeon ES-2697 machine with 56 cores and 256 GB of memory. The ReLU activation was used for all models, and the learning rate ranged between $10^{-2}$ and $10^{-3}$.

\subsubsection{Results with different numbers of processors}
Figure \ref{asgd_accuracy} shows how our method performs with asynchronous stochastic gradient descent (ASGD) using only 5\% of the neurons of a full-sized neural network. The neural network has three hidden layers, and each hidden layer contains 1000 neurons. The x-axis represents the number of epochs completed, and the y-axis shows the test accuracy for the given epoch. We compare how our model converges with multiple processors working concurrently. Since our ASGD implementation does not use locks, it depends on the sparsity of the gradient to ensure the model converges and performs well \cite{recht2011hogwild}. From our experiments, we see that our method converges at a similar rate and obtains the same accuracy regardless of the number of processors running ASGD.

Figure \ref{asgd_performance} illustrates how our method scales with multiple processors. The inherent sparsity of our randomized hashing approach reduces the number of simultaneous updates and allows for more asynchronous models without any performance penalty. We show the corresponding drop in wall-clock computation time per epoch while adding more processors. We achieve roughly a 31x speed up while running ASGD with 56 threads.

\subsubsection{ASGD Performance Comparison with Standard Neural Network}
Figure \ref{asgd_lsh_std} compares the performance of our LSH approach against a standard neural network (STD) when running ASGD with 56-cores. We used a standard network with a mini-batch of size 32. We clearly out-perform the standard network for all of our experimental datasets. However, since there is a large number of processors applying gradients to the parameters, those gradients are constantly being overridden, preventing ASGD from converging to an optimal local minimum. Our approach produces a spare gradient that reduces the conflicts between the different processors, while keeping enough valuable nodes for ASGD to progress towards the local minimum efficiently. 

From Figures \ref{asgd_accuracy}, \ref{asgd_lsh_std} and \ref{asgd_performance}, we conclude the following. 
\begin{enumerate}
\item The gradient updates are quite sparse with 5\% LSH and running multiple ASGD updates in parallel does not affect the convergence rate of our hashing based approach. Even when running 56 cores in parallel, the convergence is indistinguishable from the sequential update (1 core) on all the four datasets.
\item If we instead run vanilla SGD in parallel, then the convergence is effected. The convergence is in general slower compared the sparse 5\% LSH. This slow convergence is due to dense updates which leads to overwrites. Parallelizing dense updates affects the four datasets differently. For convex dataset, the convergence is very poor.
\item As expected, we obtain near-perfect decrease in the wall clock times with increasing the number of processors with LSH-5\%. Note, if there are too many overwrites, then atomic overwrites are necessary, which will create additional overhead and hurt the parallelism. Thus, near-perfect scalability also indicates fewer gradient overwrites. 
\item On the largest dataset - MNIST8M, the running time per epoch for the 1-core implementation is 50,254 seconds. The 56-core implementation runs in 1,637 seconds. Since the convergence is not affected, this amounts to about a 31x speedup in the training process with 56 processors. 
\item We see that the gains of parallelism become flat with the Convex and Rectangle datasets, especially while using a large number of processors. This poor scaling is because the two datasets have fewer training examples, and so there is less parallel work for a large number of processor, especially when working with 56 cores. We do not see such behaviors with MNIST8M which has around 8 million training examples.
\end{enumerate}

\section{Discussions}
Machine learning with a huge parameter space is becoming a common phenomenon. SGD remains the most promising algorithm for optimization due its effectiveness and simplicity. SGD that updates from a single data point is unlikely to change the entire parameter space significantly. Each SGD iteration is expected to change only a small set (the active set) of parameters, depending on the current sample. This sparse update occurs because there is not enough information in one data point. However, identifying that small set of active parameters is a search problem, which typically requires computations on the order of parameters. We can exploit the rich literature in approximate query processing to find this active set of parameters efficiently. Of course, the approximate active set contains a small amount of random noise, which is often good for generalization. Sparsity and randomness enhance data parallelism because sparse, and random SGD updates are unlikely to overwrite each other. 

Approximate query processing already sits on decades of research from the systems and database community, which makes the algorithm scalable over a variety of distributed systems. Thus, we can forget about the systems challenges by reformulating the machine learning problem into an approximate query processing problem and levering the ideas and implementation from a very rich literature. We have demonstrated one concrete evidence by showing how deep networks can be scaled-up. We believe that such integration of sparsity with approximate query processing, which is already efficient over different systems, is the future of large-scale machine learning.

\section{Conclusion and Future Work}
Our randomized hashing approach is designed to minimize the amount of computation necessary to train a neural network effectively. Training neural networks requires a significant amount of computational resources and time, limiting the scalability of neural networks and its sustainability on embedded, low-powered devices. Our results show that our approach performs better than the other methods when using only 5\% of the nodes executed in a standard neural network. The implication is that neural networks using our approach can run on mobile devices with longer battery life while maintaining performance. We also show that due to inherent sparsity our approach scales near-linearly with asynchronous stochastic gradient descent running with multiple processors.

Our future work is to optimize our approach for different mobile processors. We have many choices to choose from including the Nvidia Tegra and Qualcomm Snapdragon platforms. Recently, Movidius released plans for a specialized deep learning processor called the Myriad 2 VPU (visual processing unit) \cite{barry2015always}. The processor combines the best aspects of GPUs, DSPs, and CPUs to produce 150 GFLOPs and while using about 1 W of power. We plan to leverage the different platform architectures to achieve greater efficiency with our randomized hashing approach.

\section{Acknowledgments}
The work of Ryan Spring was supported from NSF Award 1547433. This work was supported by Rice Faculty Initiative Award 2016.

\bibliographystyle{abbrv}
\bibliography{vldb}
\balance

\end{document}